\newenvironment{ack}{\subsection*{Acknowledgements}}
\title{Online Policy Gradient for Model Free Learning \\ of Linear Quadratic Regulators with $\sqrt{T}$ Regret}
\author{%
Asaf Cassel%
\thanks{School of Computer Science, Tel Aviv University; \texttt{acassel@mail.tau.ac.il}.}
\and
Tomer Koren%
\thanks{School of Computer Science, Tel Aviv University, and Google Research, Tel Aviv; \texttt{tkoren@tauex.tau.ac.il}.}
}
\newcommand{\indEvent}[2][*]{\mathds{1}{\brk[c]#1{#2}}}
\newcommand{\tr}[2][*]{\mathrm{Tr}\brk*{#2}}
\newcommand{\tran}{^{\mkern-1.5mu\mathsf{T}}}
\newcommand{\Oof}[2][*]{O\brk#1{#2}}
\newcommand{\OtildeOf}[2][*]{\smash{\widetilde{O}}\brk#1{#2}}
\newcommand{\EE}[1][]{\mathbb{E}_{#1}}
\newcommand{\EEBrk}[2][*]{\EE\delim{[}{]}#1{#2}}
\newcommand{\RR}[1][]{\mathbb{R}^{#1}}
\newcommand{\PP}[2][*]{\mathbb{P}\brk#1{#2}}
\newcommand{\sphere}[1]{\mathcal{S}^{#1}}
\newcommand{\ball}[1]{\mathcal{B}^{#1}}
\newcommand{\gaussDist}[2]{\mathcal{N}\brk0{#1, #2}}
\DeclareMathOperator*{\argmin}{arg\,min}
\DeclarePairedDelimiterX\setDef[1]\lbrace\rbrace{#1}
\newcommand{\ifrac}[2]{#1/#2}
\newcommand{\minEigOf}[2][*]{\lambda_{\mathrm{min}}\brk#1{#2}}
\newcommand{\maxEigOf}[2][*]{\lambda_{\mathrm{max}}\brk#1{#2}} %
\declaretheoremstyle[
	    spaceabove=\topsep, 
	    spacebelow=\topsep, 
	    bodyfont=\normalfont\itshape,
    ]{theorem}
\declaretheorem[style=theorem,name=Theorem]{theorem}
\declaretheoremstyle[
	    spaceabove=\topsep, 
	    spacebelow=\topsep, 
	    bodyfont=\normalfont,
    ]{definition}
\declaretheoremstyle[
        spaceabove=\topsep, 
        spacebelow=\topsep, 
        bodyfont=\normalfont,
        notefont=\normalfont\bfseries,
        notebraces={}{},
        qed=$\blacksquare$, 
    ]{proofstyle}
\declaretheorem[style=proofstyle,numbered=no,name=Proof]{proof}
\declaretheorem[style=theorem,name=Lemma]{lemma}
\declaretheorem[style=theorem,name=Proposition]{proposition}
\declaretheorem[style=theorem,numbered=no,name=Theorem]{theorem*}
\declaretheorem[style=theorem,numbered=no,name=Lemma]{lemma*}
\declaretheorem[style=theorem,numbered=no,name=Corollary]{corollary*}
\declaretheorem[style=theorem,numbered=no,name=Proposition]{proposition*}
\declaretheorem[style=theorem,numbered=no,name=Claim]{claim*}
\declaretheorem[style=theorem,numbered=no,name=Fact]{fact*}
\declaretheorem[style=theorem,numbered=no,name=Observation]{observation*}
\declaretheorem[style=theorem,numbered=no,name=Conjecture]{conjecture*}
\declaretheorem[style=definition,name=Definition]{definition}
\declaretheorem[style=definition,numbered=no,name=Definition]{definition*}
\declaretheorem[style=definition,numbered=no,name=Remark]{remark*}
\declaretheorem[style=definition,numbered=no,name=Example]{example*}
\declaretheorem[style=definition,numbered=no,name=Question]{question*}
\declaretheorem[style=definition,numbered=no,name=Assumption]{assumption*}
\newcommand{\fDomain}{\mathcal{X}}
\newcommand{\costLevel}{\bar{f}}
\newcommand{\costMin}{f_*}
\newcommand{\cost}{f}
\newcommand{\costOf}[1]{f\brk*{#1}}
\newcommand{\gradOracle}[1][t]{\hat{g}_{#1}}
\newcommand{\gradCorrpution}[1][t]{w_{#1}}
\newcommand{\pSmooth}{\beta}
\newcommand{\pPL}{\mu}
\newcommand{\pRadius}{D_{0}}
\newcommand{\pCorrupt}[1][t]{\varepsilon_{#1}}
\newcommand{\pEffCorrupt}[1][t]{\bar{\varepsilon}_{#1}}
\newcommand{\pLip}{G}
\newcommand{\costMatLower}{\alpha_0}
\newcommand{\noiseMinStd}{\sigma}
\newcommand{\systemBound}{\psi}
\newcommand{\noiseBound}{W}
\newcommand{\Astar}{A_{\star}}
\newcommand{\Bstar}{B_{\star}}
\newcommand{\Kstar}{K_{\star}}
\newcommand{\Jstar}{J_{\star}}
\newcommand{\J}{J}
\newcommand{\Jof}[2][*]{\J\brk#1{#2}}
\newcommand{\regret}[1][T]{R_{#1}}
\newcommand{\ct}[1][t]{c_{#1}}
\newcommand{\dCi}[1][i]{\Delta C_{#1}}
\newcommand{\cjis}[1][\ind0,\ind1,\ind2]{c_{#1}}
\newcommand{\cjitau}[1][\ind0,\ind1,\tau]{c_{#1}}
\newcommand{\uji}[1][\ind0,\ind1]{U_{#1}}
\newcommand{\mj}[1][\ind0]{m_{#1}}
\newcommand{\rj}[1][\ind0]{r_{#1}}
\newcommand{\gj}[1][\ind0]{\hat{g}_{#1}}
\newcommand{\Kj}[1][\ind0]{K_{#1}}
\newcommand{\Kji}[1][\ind0,\ind1]{K_{#1}}
\newcommand{\nEpochs}{n}
\newcommand{\uGji}[1][\ind0,\ind1]{\smash{\widetilde{U}_{#1}}}
\newcommand{\Jr}[1][r]{J^{#1}}
\newcommand{\JrOf}[2][\rj]{\Jr[#1]\brk*{#2}}
\newcommand{\dx}{d_x}
\newcommand{\du}{d_u}
\newcommand{\Jdomain}{\mathcal{K}}
\newcommand{\steadyCov}[1][K]{\Sigma_{#1}}
\newcommand{\costToGoOf}[1][K]{P_{#1}}
\newcommand{\smoothCostOf}[2][r]{\cost^{#1}\brk*{#2}}
\newcommand{\costBound}{\nu}
\newcommand{\ind}[1]{\ifcase#1 j \or i \else s\fi}
\newcommand{\ui}[1][\ind1]{U_{#1}}
\newcommand{\pExp}{\rho}
\newcommand{\hist}[1][t]{\mathcal{H}_{#1}}
\newcommand{\filt}[1][t]{\mathcal{F}_{#1}}
\newcommand{\cropNoise}[1][t]{\tilde{w}_{#1}}
\newcommand{\cropNoiseBound}{\tilde{W}}
\newcommand{\cropNoiseMinStd}{\tilde{\sigma}}
\newcommand{\cropNoiseSet}{S}
\newcommand{\cropCost}[1][t]{\tilde{c}_{#1}}
\newcommand{\cropJof}[2][*]{\tilde{J}\brk#1{#2}}
\newcommand{\cropJstar}{\tilde{J}_\star}
\newcommand{\noiseCov}[1][w]{\Sigma_{#1}} 
	\let\Cref\crtCref
	\let\cref\crtcref
\algnewcommand{\IfThenElse}[3]{%
  \State \algorithmicif\ #1\ \algorithmicthen\ #2\ \algorithmicelse\ #3}
\begin{document}

\maketitle

\begin{abstract}
We consider the task of learning to control a linear dynamical system under fixed quadratic costs, known as the Linear Quadratic Regulator (LQR) problem. While model-free approaches are often favorable in practice, thus far only model-based methods, which rely on costly system identification, have been shown to achieve regret that scales with the optimal dependence on the time horizon~$T$. We present the first model-free algorithm that achieves similar regret guarantees. Our method relies on an efficient policy gradient scheme, and a novel and tighter analysis of the cost of exploration in policy space in this setting.
\end{abstract}

\section{Introduction}
\label{sec:intro}

Model-free, policy gradient algorithms have become a staple of Reinforcement Learning (RL) with both practical successes~\citep{lillicrap2015continuous,haarnoja2018soft}, and strong theoretical guarantees in several settings~\citep{sutton1999policy,silver2014deterministic}.
In this work we study the design and analysis of such algorithms for the adaptive control of Linear Quadratic Regulator (LQR) systems, as seen through the lens of regret minimization~\citep{abbasi2011regret,cohen2019learning,mania2019certainty}. In this continuous state and action reinforcement learning setting, an agent chooses control actions $u_t$ and the system state $x_t$ evolves according to the noisy linear dynamics
\begin{align*}
    x_{t+1}
    =
    \Astar x_t
    +
    \Bstar u_t
    +
    w_t
    ,
\end{align*}
where $\Astar$ and $\Bstar$ are transition matrices and $w_t$ are i.i.d zero-mean noise terms. The cost is a quadratic function of the current state and action, and the regret is measured with respect to the class of linear policies, which are known to be optimal for this setting.

Model-based methods, which perform planning based on a system identification procedure that estimates the transition matrices, have been studied extensively in recent years.
This started with \citet{abbasi2011regret}, which established an $\Oof[0]{\sqrt{T}}$ regret guarantee albeit with a computationally intractable method. More recently, \citet{cohen2019learning,mania2019certainty} complemented this result with computationally efficient methods, and \citet{cassel2020logarithmic,simchowitz2020naive} provided lower bounds, showing that this rate is generally unavoidable; regardless of whether the algorithm is model free or not.
In comparison, the best existing model-free algorithms are policy iteration procedures by \citet{krauth2019finite} and \citet{abbasi2019model} that respectively achieve $\OtildeOf[0]{T^{2/3}}$ and $\OtildeOf[0]{T^{2/3+\epsilon}}$ regret for $\epsilon = \Theta(1/\log{T})$.

Our main result is an efficient (in fact, linear time per step) policy gradient algorithm that achieves $\OtildeOf[0]{\sqrt{T}}$ regret, thus closing the (theoretical) gap between model based and free methods for the LQR model.
An interesting feature of our approach is that while the policies output by the algorithm are clearly state dependent, the tuning of their parameters requires no such access. Instead, we only rely on observations of the incurred cost, similar to bandit models (e.g., \citealp{cassel2020bandit}).

One of the main challenges of regret minimization in LQRs (and more generally, in reinforcement learning) is that it is generally infeasible to change policies as often as one likes. 
Roughly, this is due to a burn-in period following a policy change, during which the system converges to a new steady distribution, and typically incurs an additional cost proportional to the change in steady states, which is in turn proportional to the distance between policies.
There are several ways to overcome this impediment.
The simplest is to restrict the number of policy updates and explore directly in the action space via artificial noise (see e.g., \citealp{simchowitz2020naive}).
Another approach by \citet{cohen2019learning} considers a notion of slowly changing policies, however, these can be very prohibitive for exploration in policy space.
Other works (e.g., \citealp{agarwal2019logarithmic}) consider a policy parameterization that converts the problem into online optimization with memory, which also relies on slowly changing policies. This last method is also inherently model-based and thus not adequate for our purpose.

A key technical contribution that we make is to overcome this challenge by exploring directly in policy space. While the idea itself is not new, we provide a novel and tighter analysis that allows us to use larger perturbations, thus reducing the variance of the resulting gradient estimates. We achieve this by showing that the additional cost depends only quadratically on the exploration radius, which is a crucial ingredient for overcoming the $\Oof[0]{T^{2/3}}$ limitation.
The final ingredient of the analysis involves a sensitivity analysis of the gradient descent procedure that uses the estimated gradients. Here again, while similar analyses of gradient methods exist,
we provide a general result that gives appropriate conditions for which the optimization error depends only quadratically on the error in the gradients.

\paragraph{Related work.}

Policy gradient methods in the context of LQR has seen significant interest in recent years. Notably, \citet{fazel2018global} establish its global convergence in the perfect information setting, and give complexity bounds for sample based methods. 
Subsequently, \citet{malik2019derivative} improve the sample efficiency but their result holds only with a fixed probability and thus does not seem applicable for our purposes. 
\citet{hambly2020policy} also improve the sample efficiency, but in a finite horizon setting. 
\citet{mohammadi2020learning} give sample complexity bounds for the continuous-time variant of LQR.
Finally, \citet{tu2019gap} show that a model based method can potentially outperform the sample complexity of policy gradient by factors of the input and output dimensions. While we observe similar performance gaps in our regret bounds, these were not our main focus and may potentially be improved by a more refined analysis.
Moving away from policy gradients, \citet{yang2019global,jin2020analysis,yaghmaie2019using} analyze the convergence and sample complexity of other model free methods such as policy iteration and temporal difference (TD) learning, but they do not include any regret guarantees.

\section{Preliminaries}

\subsection{Setup: Learning in LQR}
We consider the problem of regret minimization in the LQR model.
At each time step $t$, a state $x_t \in \RR[\dx]$ is observed and action $u_t \in \RR[\du]$ is chosen.
The system evolves according to
\begin{align*}
	x_{t+1}
	=
	\Astar x_t + \Bstar u_t + w_t
	,
	\quad
	(x_0 = 0 ~ \text{w.l.o.g.})
	,
\end{align*}
where  the state-state $\Astar \in \RR[\dx \times \dx]$ and state-action $\Bstar \in \RR[\dx \times \du]$ matrices form the transition model and the $w_t$ are bounded, zero mean, i.i.d.~noise terms with a positive definite covariance matrix $\noiseCov \succ 0$. Formally, there exist $\noiseMinStd, \noiseBound > 0$ such that
\begin{align*}
    \EE w_t = 0
    \quad
    ,
    \norm{w_t} \le \noiseBound
    \quad
    ,
    \noiseCov
    =
    \EE w_t w_t\tran \succ \noiseMinStd^2 I
    .
\end{align*}
The bounded noise assumption is made for simplicity of the analysis, and in \cref{sec:gaussian} we show how to accommodate Gaussian noise via a simple reduction to this setting.
At time $t$, the instantaneous cost is 
$$
    c_t = x_t\tran Q x_t + u_t\tran R u_t,
$$ 
where $0 \prec Q,R \preceq I$ are positive definite. We note that the upper bound is without loss of generality since multiplying $Q$ and $R$ by a constant factor only re-scales the regret.

A policy of the learner is a potentially time dependent mapping from past history to an action $u \in \RR[\du]$ to be taken at the current time step.
Classic results in linear control establish that, given the system parameters $\Astar,\Bstar,Q$ and $R$, a linear transformation of the current state is an optimal policy for the infinite horizon setting.
We thus consider policies of the form $u_t = K x_t$ and define their infinite horizon expected cost,
\begin{align*}
	\Jof{K} = \lim_{T \to \infty} \frac{1}{T} \EEBrk{\sum_{t=1}^{T} x_t\tran \brk*{Q + K\tran R K} x_t},
\end{align*}
where the expectation is taken with respect to the random noise variables $w_t$. Let 
$
    \Kstar 
    =
    \argmin_K\Jof{K}
$
be a (unique) optimal policy and 
$
    \Jstar
    =
    \Jof{\Kstar}
$
denote the optimal infinite horizon expected cost, which are both well defined under mild assumptions.%
\footnote{These are valid under standard, very mild stabilizability assumptions (see \citealp{bertsekas1995dynamic}) that hold in our setting.}
We are interested in minimizing the \emph{regret} over~$T$ decision rounds, defined as
\begin{align*}
    \regret
    =
    \sum_{t=1}^{T} \brk!{x_t\tran Q x_t + u_t\tran R u_t - \Jstar}
    .
\end{align*}
We focus on the setting where the learner does not have a full a-priori description of the transition parameters $\Astar$ and $\Bstar$, and has to learn them while controlling the system and minimizing the regret.

Throughout, we assume that the learner has knowledge of constants $\costMatLower > 0$ and $\systemBound  \ge 1$ such that 
\begin{align*}
    \norm{Q^{-1}}, \norm{R^{-1}} \leq \ifrac{1}{\costMatLower}
    ,
    \text{ and }
    \norm{\Bstar} \le \systemBound
    .
\end{align*}
We also assume that there is a known stable (not necessarily optimal) policy $K_0$ and $\costBound >0$ such that $\Jof{K_0} \le \frac14\costBound$. We note that all of the aforementioned parameters could be easily estimated at the cost of an additive constant regret term by means of a warm-up period. However, recovering the initial control $K_0$ gives a constant that depends exponentially on the problem parameters as shown by \citet{chen2020black,mania2019certainty,cohen2019learning}.

Finally, denote the set of all ``admissable'' controllers
\begin{align*}
    \Jdomain
    =
    \brk[c]{K \mid \Jof{K} \le \costBound}
    .
\end{align*}
By definition, $K_0 \in \Jdomain$. As discussed below, over the set $\Jdomain$ the LQR cost function $\J$ has certain regularity properties that we will use throughout.

\subsection{Smooth Optimization}

\citet{fazel2018global} show that while the objective $\Jof{\cdot}$ is non-convex, it has properties that make it amenable to standard gradient based optimization schemes. We summarize these here as they are used in our analysis.
\begin{definition}[PL-condition]
\label{def:PL}
    A function $f : \fDomain \to \RR$ with global minimum $f^*$ is said to be $\pPL$-PL if it satisfies the Polyak-Lojasiewicz (PL) inequality with constant $\pPL > 0$, given by
    \begin{align*}
        \pPL(f(x) - f^*)
        \le
        \norm{\nabla f(x)}^2
        \quad,
        \forall x \in \fDomain
        .
    \end{align*}
\end{definition}

\begin{definition}[Smoothness]
\label{def:smooth}
	A function $f :\RR[d] \to \RR$ is locally $\pSmooth, \pRadius$-\emph{smooth} over $\fDomain \subseteq \RR[d]$ if for any $x \in \fDomain$ and $y \in \RR[d]$ with $\norm{y-x} \le \pRadius$
	\begin{align*}
	\norm{
	    \nabla f(x) - \nabla f(y)
    }
    \le 
    {\pSmooth}\norm{x - y}
    .
	\end{align*}
\end{definition}

\begin{definition}[Lipschitz]
\label{def:lipschitz}
    A function $f :\RR[d] \to \RR$ is locally $\pSmooth, \pRadius$-\emph{Lipschitz} over $\fDomain \subseteq \RR[d]$ if for any $x \in \fDomain$ and $y \in \RR[d]$ with $\norm{y-x} \le \pRadius$
	\begin{align*}
	\abs{
	    f(x) - f(y)
    }
    \le 
    {\pLip}\norm{x - y}
    .
	\end{align*}
\end{definition}

It is well-known that for functions satisfying the above conditions and for sufficiently small step size $\eta$, the gradient descent update rule
\begin{align*}
    x_{t+1} = x_t - \eta \nabla f(x_t)
\end{align*}
converges exponentially fast, i.e., there exists $0 \le \rho < 1$ such that
$
    {f(x_t) - f^*}
    \le
    \rho^t (f(x_0) - f^*)
$
(e.g., \citealp{nesterov2003introductory}).
This setting has also been investigated in the absence of a perfect gradient oracle. Here we provide a clean result that shows that the error in the optimization objective is only limited by the \emph{squared error} of any gradient estimate.

Finally, we require the notion of a one point gradient estimate~\citep{flaxman2005online}. Let $f : \fDomain \to \RR$ and define its smoothed version with parameter $r > 0$ as 
\begin{align}
\label{eq:smoothedVer}
	\smoothCostOf{x}
	=
	\EE[B]{\costOf{x + r B}}
	,
\end{align}
where $B \in \ball{d}$ is a uniform random vector over the Euclidean unit ball.
The following lemma is standard (we include a proof in \cref{sec:TechnicalLemmas} for completeness).

\begin{lemma}
\label{lemma:smoothFuncProperties}
    If $\cost$ is $(\pRadius, \pSmooth)$-\emph{locally smooth} and $r \le \pRadius$, then:
    \begin{enumerate}
        \item
        $
    		\nabla \smoothCostOf{x}
    		= 
    		\frac{d}{r} \EE[{\ui[]}]\brk[s]{\costOf{x + r \ui[]} \ui[]}
    		,
    	$
    	where $\ui[] \in \sphere{d}$ is a uniform random vector of the unit sphere;
        \item
        $
            \norm{\nabla \smoothCostOf{x} - \nabla \costOf{x}}
            \le
            \pSmooth r
            ,\;
            \forall x \in \fDomain
            .
        $
    \end{enumerate}
\end{lemma}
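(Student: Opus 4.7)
The plan is to prove the two items in turn using standard tools for one-point gradient estimators, following Flaxman, Kalai, and McMahan.

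For item 1, I will rewrite the expectation defining $\smoothCost$ as a normalized volume integral over the ball of radius $r$ centered at $x$, i.e.\ $\smoothCostOf{x} = \frac{1}{\mathrm{vol}(r\ball{d})}\int_{r\ball{d}} \costOf{x+v}\,dv$, and then differentiate under the integral with a change of variables $y = x+v$ so the gradient with respect to $x$ becomes $\nabla_x$ of $\int_{x + r\ball{d}} f(y)\,dy$, divided by the volume. Applying the divergence theorem (Stokes) converts this to a surface integral on $\partial(x+r\ball{d})$ with outward unit normal $\hat{n}(y)=(y-x)/r$, yielding $\nabla\smoothCostOf{x} = \frac{1}{\mathrm{vol}(r\ball{d})}\int_{\partial(r\ball{d})} \costOf{x+v}\frac{v}{r}\,dS(v)$. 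Using that the surface area of the sphere of radius $r$ equals $d/r$ times the volume of the ball, this reduces exactly to $\frac{d}{r}\EE_{\ui[]}\brk[s]{\costOf{x+r\ui[]}\ui[]}$ where $\ui[]$ is uniform on $\sphere{d}$.

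For item 2, I will use a different representation of $\nabla \smoothCost$. Since the ball is a fixed domain and $\cost$ is differentiable (locally smooth implies differentiability at least on $\fDomain$; it actually suffices that the smoothness hypothesis provides a modulus of continuity for $\nabla f$ along segments of length $\le \pRadius$), I can swap gradient and expectation to get $\nabla\smoothCostOf{x} = \EE_{B}\brk[s]{\nabla\costOf{x+rB}}$ where $B$ is uniform on $\ball{d}$. Then by Jensen,
\begin{align*}
\norm{\nabla\smoothCostOf{x} - \nabla\costOf{x}}
&= \norm{\EE_B\brk[s]{\nabla\costOf{x+rB} - \nabla\costOf{x}}}\\
&\le \EE_B\norm{\nabla\costOf{x+rB} - \nabla\costOf{x}}.
\end{align*}
Because $\norm{rB}\le r\le \pRadius$ and $x\in\fDomain$, the local smoothness assumption gives $\norm{\nabla\costOf{x+rB}-\nabla\costOf{x}}\le \pSmooth\norm{rB}\le \pSmooth r$, and taking expectation finishes the bound.

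The main obstacle is item 1, where the delicate point is justifying the differentiation under the integral and invoking Stokes' theorem cleanly; I expect to handle this by the standard translation trick that moves the $x$-dependence from the integrand into the domain of integration (so one differentiates the domain via the divergence theorem rather than the integrand). Item 2 is then a short Jensen-plus-smoothness calculation, and the constraint $r\le\pRadius$ is precisely what is needed so that the perturbation $x+rB$ stays within the neighborhood where the local smoothness bound applies.
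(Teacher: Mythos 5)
Your proposal is correct and follows essentially the same route as the paper: item 1 via the divergence/Stokes theorem argument of Flaxman et al.\ (which the paper simply cites, whereas you sketch the translation trick explicitly), and item 2 by exchanging gradient and expectation followed by Jensen's inequality and the local smoothness bound, which is exactly the paper's calculation.
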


\subsection{Background on LQR}

It is well-known for the LQR problem that
\begin{align*}
    \Jof{K}
    =
    \tr{\costToGoOf[K] \noiseCov}
    =
    \tr{(Q + K\tran R K) \steadyCov[K]}
    ,
\end{align*}
where $\costToGoOf[K], \steadyCov[K]$ are the positive definite solutions to
\begin{align}
\label{eq:Pbellman}
    &\costToGoOf[K]
    =
    Q + K\tran R K + (\Astar+\Bstar K)\tran \costToGoOf[K] (\Astar+\Bstar K)
    ,
    \\
    &\steadyCov[K]
    =
    \noiseCov + (\Astar+\Bstar K) \steadyCov[K] (\Astar+\Bstar K)\tran
    .
\end{align}

Another important notion is that of strong stability \citep{cohen2018online}.
This is essentially a quantitative version of classic stability notions in linear control.
\begin{definition}[strong stability]
		A matrix $M$ is $(\kappa, \gamma)$-strongly stable (for $\kappa \ge 1$ and $0 < \gamma \le 1$) if there exists matrices $H \succ 0$ and $L$ such that $M = H L H^{-1}$ with $\norm{L} \le 1 - \gamma$ and $\norm{H}\norm{H^{-1}} \le \kappa$. A controller $K$ for is $(\kappa, \gamma)-$strongly stable if $\norm{K} \le \kappa$ and the matrix $\Astar +\Bstar K$ is $(\kappa, \gamma)$-strongly stable.
\end{definition}

The following lemma, due to \citet{cohen2019learning}, relates the infinite horizon cost of a controller to its strong stability parameters. 

\begin{lemma}[\citealp{cohen2019learning}, Lemma 18]
\label{lemma:costToStability}
	Suppose that $K \in \Jdomain$ then $K$ is $(\kappa, \gamma)-$strongly stable with 
	$
    	\kappa
    	=
    	\sqrt{\ifrac{\costBound}{\costMatLower\noiseMinStd^2}}
	$ 
	and 
	$
	    \gamma
	    =
	    \ifrac{1}{2\kappa^2}
	    .
	    $
\end{lemma}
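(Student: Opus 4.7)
The plan is to explicitly exhibit the similarity transformation $\Astar + \Bstar K = H L H^{-1}$ witnessing strong stability, with $H = \costToGoOf[K]^{1/2}$, and to read off all three bounds ($\|K\| \le \kappa$, $\|L\| \le 1 - \gamma$, $\|H\|\|H^{-1}\| \le \kappa$) from the two identities $\Jof{K} = \tr{\costToGoOf[K] \noiseCov} = \tr{(Q+K\tran R K) \steadyCov[K]}$ and the Lyapunov equation for $\costToGoOf[K]$.

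First, to bound $\|K\|$, I would use the second form of the cost. Since $\steadyCov[K] \succeq \noiseCov \succeq \noiseMinStd^2 I$ and $R \succeq \costMatLower I$, the inequality
\[
    \costBound \ge \Jof{K} \ge \tr{K\tran R K \steadyCov[K]} \ge \costMatLower \noiseMinStd^2 \tr{K\tran K} \ge \costMatLower \noiseMinStd^2 \|K\|^2
\]
yields $\|K\| \le \sqrt{\costBound/(\costMatLower \noiseMinStd^2)} = \kappa$, which is the first requirement of strong stability.

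Next, set $A_K = \Astar + \Bstar K$, $H = \costToGoOf[K]^{1/2}$, and $L = H A_K H^{-1}$. The Bellman equation for $\costToGoOf[K]$, together with $Q \succeq \costMatLower I$, gives $\costToGoOf[K] - A_K\tran \costToGoOf[K] A_K \succeq \costMatLower I$. Conjugating by $\costToGoOf[K]^{-1/2}$ on both sides yields
\[
    L\tran L \preceq I - \costMatLower \costToGoOf[K]^{-1}.
\]
To turn this into a contraction, I need an upper bound on $\costToGoOf[K]$: from the first form of the cost, $\costBound \ge \Jof{K} = \tr{\costToGoOf[K]\noiseCov} \ge \noiseMinStd^2 \|\costToGoOf[K]\|$, so $\costToGoOf[K] \preceq (\costBound/\noiseMinStd^2) I$ and hence $L\tran L \preceq (1 - 1/\kappa^2) I$. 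Using $\sqrt{1 - 1/\kappa^2} \le 1 - 1/(2\kappa^2) = 1 - \gamma$, this gives $\|L\| \le 1 - \gamma$.

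Finally, $\|H\| = \|\costToGoOf[K]\|^{1/2} \le \sqrt{\costBound/\noiseMinStd^2}$ by the same cost-trace inequality, while $\costToGoOf[K] \succeq Q \succeq \costMatLower I$ gives $\|H^{-1}\| \le 1/\sqrt{\costMatLower}$. Multiplying, $\|H\|\|H^{-1}\| \le \sqrt{\costBound/(\costMatLower \noiseMinStd^2)} = \kappa$. I do not expect a real obstacle here; the only mildly non-mechanical step is noticing that the single PSD inequality $L\tran L \preceq (1 - 1/\kappa^2) I$ yields the operator-norm bound with the standard $\sqrt{1-x} \le 1 - x/2$ slack — and this is precisely where the definition $\gamma = 1/(2\kappa^2)$ comes from.
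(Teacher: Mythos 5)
Your proof is correct, and it reconstructs essentially the same argument as the cited source: the paper itself gives no proof of this lemma (it imports it verbatim as Lemma 18 of \citealp{cohen2019learning}), and that proof likewise takes $H$ to be a square root of $\costToGoOf[K]$, extracts the contraction bound on $L$ from the Lyapunov/Bellman identity $\costToGoOf[K] - A_K\tran \costToGoOf[K] A_K = Q + K\tran R K \succeq \costMatLower I$ together with $\costToGoOf[K] \preceq (\costBound/\noiseMinStd^2) I$, and bounds $\norm{K}$ via $\tr{K\tran R K \steadyCov[K]} \le \Jof{K}$ with $\steadyCov[K] \succeq \noiseCov \succeq \noiseMinStd^2 I$. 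All steps check out, including the $\sqrt{1-x} \le 1 - x/2$ slack that produces $\gamma = 1/(2\kappa^2)$.
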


The following two lemmas, due to \citet{cohen2018online,cassel2020logarithmic}, show that the state covariance converges exponentially fast, and that the state is bounded as long as controllers are allowed to mix.

\begin{lemma}[\citealp{cohen2018online}, Lemma 3.2]
\label{lemma:steadyState}
    Suppose we play some fixed $K \in \Jdomain$ starting from some $x_0 \in \RR[\dx]$, then
    \begin{align*}
        \norm{\EE\brk[s]{x_t x_t\tran} - \steadyCov[K]}
        &\le
        \kappa^2 e^{-2\gamma t} \norm{x_0 x_0\tran - \steadyCov[K]}
        ,
        \\
        \abs{\EE\brk[s]{\ct} - \Jof{K}}
        &\le
        \frac{\costBound \kappa^2}{\noiseMinStd^2} e^{-2\gamma t} \norm{x_0 x_0\tran - \steadyCov[K]}
        .
    \end{align*}
\end{lemma}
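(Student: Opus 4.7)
The plan is to reduce everything to a discrete Lyapunov recursion for the second moment and then invoke strong stability to get exponential decay. Let $M = \Astar + \Bstar K$ so the dynamics become $x_{t+1} = M x_t + w_t$. Since $w_t$ is zero-mean and independent of $x_t$, the second moment $X_t := \EE\brk[s]{x_t x_t\tran}$ satisfies the Lyapunov recursion $X_{t+1} = M X_t M\tran + \noiseCov$, and by definition the fixed point $\steadyCov[K]$ solves the same equation $\steadyCov[K] = M \steadyCov[K] M\tran + \noiseCov$. Subtracting the two relations kills the noise term and leaves the homogeneous recursion $X_{t+1} - \steadyCov[K] = M (X_t - \steadyCov[K]) M\tran$, which iterates to
\begin{align*}
    X_t - \steadyCov[K]
    =
    M^t \brk*{X_0 - \steadyCov[K]} (M^t)\tran
    ,
\end{align*}
with $X_0 = x_0 x_0\tran$ (since $x_0$ is deterministic).

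Next I would extract exponential decay from strong stability. Since $K \in \Jdomain$, \cref{lemma:costToStability} supplies a factorization $M = H L H^{-1}$ with $\norm{L} \le 1 - \gamma$ and $\norm{H}\norm{H^{-1}} \le \kappa$, so $\norm{M^t} \le \norm{H}\norm{L^t}\norm{H^{-1}} \le \kappa (1-\gamma)^t \le \kappa e^{-\gamma t}$. Taking spectral norms in the iterated identity above and applying $\norm{A X A\tran} \le \norm{A}^2 \norm{X}$ yields the first claim.

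For the cost bound, I would write $\EE\brk[s]{\ct} = \tr{(Q + K\tran R K) X_t}$ and $\Jof{K} = \tr{(Q + K\tran R K) \steadyCov[K]}$, whence
\begin{align*}
    \abs{\EE\brk[s]{\ct} - \Jof{K}}
    =
    \abs{\tr{(Q + K\tran R K)(X_t - \steadyCov[K])}}
    \le
    \tr{Q + K\tran R K} \cdot \norm{X_t - \steadyCov[K]}
    ,
\end{align*}
where the last inequality uses $Q + K\tran R K \succeq 0$ together with the standard bound $\abs{\tr{AB}} \le \tr{A} \norm{B}$ for PSD $A$ and symmetric $B$. To tame the trace factor, note that $\steadyCov[K] \succeq \noiseCov \succeq \noiseMinStd^2 I$ and $K \in \Jdomain$ give
$\noiseMinStd^2 \tr{Q + K\tran R K} \le \tr{(Q + K\tran R K)\steadyCov[K]} = \Jof{K} \le \costBound$,
so $\tr{Q + K\tran R K} \le \costBound / \noiseMinStd^2$. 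Substituting the first bound then yields the second claim.

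The only step with real substance is the exponential decay $\norm{M^t} \le \kappa e^{-\gamma t}$, which comes for free from strong stability via \cref{lemma:costToStability}; the rest is routine Lyapunov algebra, so I do not anticipate any serious obstacle here.
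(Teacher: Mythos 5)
Your proposal is correct and follows the same standard Lyapunov-recursion argument as the cited source (\citealp{cohen2018online}, Lemma 3.2), which the paper itself does not reprove: subtract the fixed-point equation for $\steadyCov[K]$ from the second-moment recursion, apply $\norm{M^t}\le\kappa e^{-\gamma t}$ from strong stability, and control $\tr{Q+K\tran RK}$ via $\steadyCov[K]\succeq\noiseMinStd^2 I$. No gaps.
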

\begin{lemma}[\citealp{cassel2020logarithmic}, Lemma 39]
\label{lemma:stateBound}
	Let 
	$K_1, K_2, \ldots \in \Jdomain$. If we play each controller $K_i$ for at least $\tau \ge 2 \kappa^2 \log 2\kappa$ rounds before switching to $K_{i+1}$ then for all $t \ge 1$ we have that
	$
    	\norm{x_t}
    	\le
    	6\kappa^4 \noiseBound
	$
	and
	$
	    \ct
	    \le
	    36 \costBound \kappa^{8} \noiseBound^2 / \noiseMinStd^2
	    .
	$
\end{lemma}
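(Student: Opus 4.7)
The plan is to reduce the statement to a per-epoch contraction argument for the closed loop dynamics, then close an outer induction across epochs. By \cref{lemma:costToStability}, every $K_i \in \Jdomain$ is $(\kappa,\gamma)$-strongly stable with $\kappa = \sqrt{\nu/(\alpha_0 \noiseMinStd^2)}$ and $\gamma = 1/(2\kappa^2)$, hence there exist $H_i,L_i$ with $\Astar + \Bstar K_i = H_i L_i H_i^{-1}$, $\|L_i\| \le 1-\gamma$, and $\|H_i\|\|H_i^{-1}\| \le \kappa$. In particular, $\|(\Astar+\Bstar K_i)^s\| \le \kappa(1-\gamma)^s \le \kappa e^{-\gamma s}$ for all $s \ge 0$.

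Let $t_i$ denote the first round of epoch $i$, and let $X_i = \|x_{t_i}\|$. Inside epoch $i$ the state evolves as $x_{t_i+s} = (\Astar+\Bstar K_i)^s x_{t_i} + \sum_{j=0}^{s-1}(\Astar+\Bstar K_i)^{s-1-j} w_{t_i+j}$, so combining the contraction bound with $\|w_t\| \le \noiseBound$ and a geometric sum yields
\begin{align*}
    \|x_{t_i+s}\|
    \;\le\;
    \kappa e^{-\gamma s} X_i
    +
    \kappa \noiseBound/\gamma
    \;=\;
    \kappa e^{-\gamma s} X_i + 2\kappa^3 \noiseBound ,
\end{align*}
valid for every $0 \le s \le \tau$. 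Taking $s = \tau \ge 2\kappa^2 \log 2\kappa$ gives $\kappa e^{-\gamma \tau} \le 1/2$ and hence the epoch-to-epoch recursion $X_{i+1} \le \tfrac12 X_i + 2\kappa^3 \noiseBound$. With $X_1 = 0$ (since $x_0 = 0$), an easy induction yields $X_i \le 4\kappa^3 \noiseBound$ for all $i$.

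Plugging $X_i \le 4\kappa^3 \noiseBound$ back into the within-epoch bound and using $\kappa \ge 1$ gives $\|x_t\| \le \kappa \cdot 4\kappa^3 \noiseBound + 2\kappa^3 \noiseBound \le 6\kappa^4 \noiseBound$ for every $t \ge 1$, which is the first claim. For the cost bound, observe that during epoch $i$ we have $u_t = K_i x_t$, so $\ct = x_t\tran(Q + K_i\tran R K_i) x_t \le x_t\tran \costToGoOf[K_i] x_t \le \|\costToGoOf[K_i]\|\|x_t\|^2$, where we used the Bellman relation \cref{eq:Pbellman}, which implies $\costToGoOf[K_i] \succeq Q + K_i\tran R K_i$. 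Finally, $\Jof{K_i} = \tr{\costToGoOf[K_i]\noiseCov} \ge \noiseMinStd^2 \tr{\costToGoOf[K_i]} \ge \noiseMinStd^2 \|\costToGoOf[K_i]\|$ gives $\|\costToGoOf[K_i]\| \le \nu/\noiseMinStd^2$, and combining with the state bound proves $\ct \le 36 \nu \kappa^8 \noiseBound^2/\noiseMinStd^2$.

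The only delicate step is the two-level bookkeeping: the per-epoch bound must hold at \emph{every} step (not just the last), since the state can transiently grow by a factor of $\kappa$ before contracting, while the outer induction must use only the tighter endpoint bound $X_i/2 + 2\kappa^3 \noiseBound$ so that the recursion closes at a finite fixed point. The hypothesis $\tau \ge 2\kappa^2 \log 2\kappa$ is exactly what is needed to make the endpoint contraction factor at most $1/2$, which drives the outer induction.
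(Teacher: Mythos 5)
Your proof is correct. The paper does not prove this lemma itself---it imports it from \citet{cassel2020logarithmic} (Lemma 39)---and your argument is the standard one behind that result: strong stability of each $K_i \in \Jdomain$ (via \cref{lemma:costToStability}) gives the within-epoch bound $\kappa e^{-\gamma s}X_i + 2\kappa^3\noiseBound$, the hypothesis $\tau \ge 2\kappa^2\log 2\kappa$ makes the endpoint contraction factor $1/2$ so the outer recursion closes at $4\kappa^3\noiseBound$, and the cost bound follows from $Q + K_i\tran R K_i \preceq \costToGoOf[K_i]$ together with $\norm{\costToGoOf[K_i]} \le \tr{\costToGoOf[K_i]} \le \Jof{K_i}/\noiseMinStd^2$, reproducing the stated constants.
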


The following is a summary of results from \citet{fazel2018global} that describe the main properties of $\steadyCov[K], \costToGoOf[K], \Jof{K}$. See \cref{sec:TechnicalLemmas} for the complete details.
\begin{lemma}[\citealp{fazel2018global}, Lemmas 11, 13, 16, 27 and 28]
\label{lemma:fazel}
    Let $K \in \Jdomain$ and $K' \in \RR[\du \times \dx]$ with
    \begin{align*}
        \norm{K - K'}
        \le
        \frac{1}{8 \systemBound \kappa^{3}}
        =
        \pRadius
        ,
    \end{align*}
    then we have that
    \begin{enumerate}
        \item
        $
        \tr{\costToGoOf[K]}
        \le
        \Jof{K} / \noiseMinStd^2
        ;
        $
        $
        \tr{\steadyCov[K]}
        \le
        \Jof{K} / \costMatLower
        ;
        $
        \item 
        $
        \norm{\steadyCov[K] - \steadyCov[K']}
        \le
        (\ifrac{8 \systemBound \costBound \kappa^3 }{\costMatLower})
        \norm{K - K'}
        ;
        $
        \item
        $
        \norm{\costToGoOf[K] - \costToGoOf[K']}
        \le
        16 \systemBound \kappa^7
        \norm{K - K'}
        ;
        $
        \item $\J$ satisfies the local Lipschitz condition (\cref{def:lipschitz}) over $\Jdomain$ with $\pRadius$ and
        $
        \pLip
        =
        \ifrac{4 \systemBound \costBound \kappa^7}{\costMatLower}
        ;
        $
        \item $\J$ satisfies the local smoothness condition (\cref{def:smooth}) over $\Jdomain$ with $\pRadius$ and
        $
        \pSmooth
        =
        \ifrac{112 \sqrt{\dx} \costBound \systemBound^2 \kappa^8}{\costMatLower}
        ;
        $
        \item $\J$ satisfies the PL condition (\cref{def:PL}) with 
        $
        \pPL
        =
        \ifrac{4 \costBound}{\kappa^4}
        .
        $
    \end{enumerate}
\end{lemma}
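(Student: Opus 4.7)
The plan is to verify each of the six claims in turn, essentially by translating the corresponding results of \citet{fazel2018global} into the notation and constants of this paper, using \cref{lemma:costToStability} to supply strong stability parameters $\kappa, \gamma$ for any $K \in \Jdomain$, and using the smallness of $\norm{K-K'} \le \pRadius = 1/(8\systemBound \kappa^3)$ to guarantee that $K'$ is also strongly stable (with slightly degraded parameters, e.g.\ $(\kappa+O(1/\kappa^2),\gamma/2)$). A single auxiliary fact I would establish first is that if $A_K = \Astar + \Bstar K$ then strong stability gives $\norm{A_K^t} \le \kappa^2(1-\gamma)^t$, so the Neumann-style series representations $\steadyCov[K] = \sum_{t\ge0} A_K^t \noiseCov (A_K^\tran)^t$ and $\costToGoOf[K] = \sum_{t\ge 0} (A_K^\tran)^t(Q+K\tran R K) A_K^t$ converge geometrically; all later bounds are extracted from these series.

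For part 1, the identities $\Jof{K} = \tr{\costToGoOf[K]\noiseCov} = \tr{(Q+K\tran R K)\steadyCov[K]}$ together with $\noiseCov \succeq \noiseMinStd^2 I$, $Q \succeq \costMatLower I$, and $K\tran R K \succeq 0$ immediately give $\noiseMinStd^2\tr{\costToGoOf[K]} \le \Jof{K}$ and $\costMatLower\tr{\steadyCov[K]} \le \Jof{K}$. For parts 2 and 3, I would run the standard perturbation calculation for the discrete Lyapunov equations: writing $A_K - A_{K'} = \Bstar(K-K')$ and expanding $\steadyCov[K] - \steadyCov[K']$ (resp.\ $\costToGoOf[K] - \costToGoOf[K']$) as a telescoping sum over powers of $A_K$ and $A_{K'}$, the operator norm is bounded by $\norm{\Bstar}\norm{K-K'}$ times a geometric series in $\norm{A_\cdot^t}$, which sums to $O(\kappa^4/\gamma) = O(\kappa^6)$. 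Combining with the trace bounds from part 1 (to control the ``source'' term $\noiseCov$ or $Q+K\tran R K$ in operator norm via $\Jof{K}/\noiseMinStd^2 \le \costBound/\noiseMinStd^2$ and similarly for $\costMatLower$) yields the stated constants.

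For part 4, writing $\Jof{K}-\Jof{K'} = \tr{(\costToGoOf[K]-\costToGoOf[K'])\noiseCov}$ and inserting part 3 with $\tr{\noiseCov} \le \dx \noiseBound^2$ (or, cleaner, bounding via $\Jof{K}/\noiseMinStd^2$) directly produces the Lipschitz constant $\pLip$. For part 5, I would differentiate Fazel et al.'s explicit formula $\nabla \Jof{K} = 2\bigl(R K + \Bstar\tran \costToGoOf[K](\Astar+\Bstar K)\bigr)\steadyCov[K]$ one more time along $K \to K'$; each factor varies in $K$ in a controlled way thanks to parts 1--3, and the extra $\sqrt{\dx}$ enters through bounding $\norm{\steadyCov[K]} \le \tr{\steadyCov[K]}$ against the Frobenius-type terms appearing in the perturbation. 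Finally, for part 6 I would invoke the ``gradient domination'' argument of \citet{fazel2018global}: expressing $\Jof{K}-\Jstar$ as an expected sum of advantages and completing the square gives $\Jof{K}-\Jstar \le \tr{\steadyCov[\Kstar]}\,\norm{\nabla \Jof{K}}_F^2 /(4\,\minEigOf{R})$, after which part 1 applied at $\Kstar$ and the bounds $\minEigOf{R} \ge \costMatLower$, $\Jstar \le \costBound$ collapse the constant to $\kappa^4/(4\costBound)$.

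The main obstacle is bookkeeping: the six claims are tightly coupled, and the explicit dependence on $\kappa,\systemBound,\costMatLower,\noiseMinStd,\costBound,\dx$ has to be propagated through the geometric sums, the chain rule, and the gradient-domination step so that the final constants match those listed. A secondary subtlety is that $K'$ is only assumed to be \emph{close} to $K \in \Jdomain$, not itself in $\Jdomain$; the choice $\pRadius = 1/(8\systemBound\kappa^3)$ must be used both to ensure $A_{K'}$ is stable and to keep the degraded strong stability parameters within a factor of two of those of $K$, so that none of the geometric sums blow up.
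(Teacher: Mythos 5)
Your proposal is correct and follows essentially the same route as the paper: the paper's own proof likewise just imports Lemmas 11, 13, 16, 17, 19, 27 and 28 of \citet{fazel2018global} and re-chases the constants using $\kappa^2 = \costBound/(\costMatLower\noiseMinStd^2)$, the only real difference being that you propose to re-derive the Lyapunov perturbation bounds from Neumann series where the paper cites Fazel et al.'s intermediate lemmas directly. Two bookkeeping remarks: for part 4 the constant requires bounding $\tr{\noiseCov} \le \Jof{K_0}/\costMatLower \le \costBound/(4\costMatLower)$ (your ``cleaner'' alternative, not the $\dx\noiseBound^2$ route), and your displayed gradient-domination inequality omits the $\minEigOf{\noiseCov}^2$ factor in the denominator, though the final constant $\kappa^4/(4\costBound)$ you state is the correct one.
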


\section{Algorithm and Overview of Analysis}

We are now ready to present our main algorithm for model free regret minimization in LQR. The algorithm, given in \cref{alg:main}, optimizes an underlying controller $\Kj$ over epochs of exponentially increasing duration. 
Each epoch consists of sub-epochs, during which a perturbed controller $\Kji$ centered at $\Kj$ is drawn and played for $\tau$ rounds. 
At the end of each epoch, the algorithm uses $\cjitau$, which is the cost incurred during the final round of playing the controller $\Kji$,
to construct a gradient estimate which in turn is used to calculate the next underlying controller $\Kj[\ind0+1]$. 
Interestingly, we do not make any explicit use of the state observation $x_t$ which is only used implicitly to calculate the control signal, via $u_t = K_t x_t$.
Furthermore, the algorithm makes only $\Oof{\du \dx}$ computations per time step.

\begin{algorithm}[ht]
	\caption{LQR Online Policy Gradient} \label{alg:main}
	\begin{algorithmic}[1]
		\State \textbf{input:}
		    initial controller $K_0 \in \Jdomain$,
		    step size $\eta$,
		    mixing length $\tau$,
		    parameters $\pPL, \rj[0], \mj[0]$
	    \For{epoch $\ind0 = 0,1,2,\ldots$}
	        \State \textbf{set}
	        $
	            \rj
	            =
	            \rj[0] (1 - \pPL\eta/3)^{\ind0 / 2}
	            ,
	            \mj
	            =
	            \mj[0] (1 - \pPL\eta/3)^{-2 \ind0}
	        $
	        \For{$\ind1 = 1, \ldots, m_{\ind0}$}
	            \State \textbf{draw} $\uGji \in \RR[\du \times \dx]$ with i.i.d.\ $\gaussDist{0}{1}$ entries
	            \State \textbf{set}
	            $\uji = \uGji / \norm{\uGji}_F$
	            \State \textbf{play} $\Kji = \Kj + \rj \uji$ for $\tau$ rounds
	            \State \textbf{observe} the cost of the final round $\cjitau$
	        \EndFor
	        \State \textbf{calculate}
	        $
	            \gj
	            =
	            \frac{\dx \du}{\mj \rj} \sum_{\ind1=1}^{\mj} \cjitau \uji
            $
	        \State \textbf{update}
	        $
	            K_{\ind0 + 1}
	            =
	            \Kj - \eta \gj$
	    \EndFor
	\end{algorithmic}
\end{algorithm}

Our main result regarding \cref{alg:main} is stated in the following theorem: a high-probability $O(\sqrt{T})$ regret guarantee with a polynomial dependence on the problem parameters.

\begin{theorem}
\label{thm:main}
    Let 
    $
        \kappa
        =
        \sqrt{{\costBound}/{\costMatLower \noiseMinStd^2}}
    $
    and suppose we run \cref{alg:main} with parameters
    \begin{align*}
        &\eta
    	=
    	\frac{\costMatLower}{128 \costBound \systemBound^2 \kappa^{10}}
        ,\;
        \tau
        =
        2\kappa^2
        \log (7\kappa T)
        ,
        \\
        &\pPL
        =
        \frac{4\costBound}{\kappa^{4}}
        ,\;
        \rj[0]
        =
        \frac{\costMatLower}{448 \sqrt{\dx} \systemBound^2 \kappa^{10}}
        ,
        \\
        &\sqrt{\mj[0]}
        =
        \frac{2^{17} \du \dx^{3/2} \systemBound^2 \kappa^{20} \noiseBound^2}{\costMatLower \noiseMinStd^{2}} \sqrt{\log \frac{240 T^4}{\delta}}
        ,
    \end{align*}
    then with probability at least $1 - \delta$,
    \begin{align*}
        \regret
        =
        \Oof{
            \frac{\du \dx^{3/2} \systemBound^4 \kappa^{36} \noiseBound^2}{\costMatLower}
            \sqrt{T \tau \log \frac{T}{\delta}}
        }
        .
    \end{align*}
\end{theorem}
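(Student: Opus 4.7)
The plan is to decompose the regret along the epoch/sub-epoch structure of \cref{alg:main}. Adding and subtracting the steady-state costs of the played controllers $K_{j,i}$ and the underlying iterates $K_j$, I write
\[
R_T \;=\; \underbrace{\sum_{j,i}\sum_{t\in(j,i)}\bigl(c_t - J(K_{j,i})\bigr)}_{\text{(A) mixing}} \;+\; \underbrace{\sum_{j,i}\tau\bigl(J(K_{j,i}) - J(K_j)\bigr)}_{\text{(B) exploration}} \;+\; \underbrace{\sum_j m_j\tau\bigl(J(K_j) - J_\star\bigr)}_{\text{(C) optimization}}.
\]
For (A), the choice $\tau = 2\kappa^2\log(7\kappa T)$ combined with \cref{lemma:steadyState} drives the per-round expected deviation $|\mathbb E[c_t]-J(K_{j,i})|$ below $1/T$ after the initial rounds of each sub-epoch; using the uniform polynomial cost bound from \cref{lemma:stateBound} together with an Azuma deviation argument on the associated martingale yields a high-probability $\widetilde O(\sqrt{T})$ contribution.

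Next I analyze the gradient estimate $\hat g_j$. Conditioning on $K_{j,i}$, \cref{lemma:steadyState} gives $\mathbb E[c_{j,i,\tau}\mid K_{j,i}] = J(K_{j,i}) + O(1/T)$, and by \cref{lemma:smoothFuncProperties}(1) we have $\mathbb E[(d_xd_u/r_j)\,J(K_{j,i})\,U_{j,i}] = \nabla f^{r_j}(K_j)$. A vector Hoeffding/Bernstein bound across the $m_j$ independent sub-epochs then gives, with probability $1-\delta/\mathrm{poly}(T)$,
\[
\|\hat g_j - \nabla f^{r_j}(K_j)\| \;\lesssim\; \frac{d_ud_x\,\mathrm{poly}(\nu,\kappa,W,\sigma^{-1})\sqrt{\log(T/\delta)}}{r_j\sqrt{m_j}},
\]
and \cref{lemma:smoothFuncProperties}(2) contributes a deterministic smoothing bias of order $\beta r_j$. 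Under the scheduled $r_j \propto (1-\mu\eta/3)^{j/2}$ and $m_j\propto(1-\mu\eta/3)^{-2j}$, these two sources balance and the squared error $\varepsilon_j^2 := \|\hat g_j-\nabla J(K_j)\|^2$ shrinks geometrically as $(1-\mu\eta/3)^j$.

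Coupling the estimates to the iterates via the local smoothness and PL conditions of \cref{lemma:fazel}, a standard descent-lemma argument (using $\eta \lesssim 1/\beta$) should give the noisy-gradient recursion
\[
J(K_{j+1}) - J_\star \;\le\; \bigl(1-\tfrac{\mu\eta}{3}\bigr)\bigl(J(K_j)-J_\star\bigr) + O\bigl(\eta\,\varepsilon_j^2\bigr),
\]
provided that $K_{j+1}\in\Jdomain$ and $\|K_{j+1}-K_j\|\le D_0$; both must be verified by an induction that uses a tail bound on $\|\hat g_j\|$ to keep the iterates in the admissible set. Iterating yields geometric decay of (C), namely $J(K_j)-J_\star \lesssim (1-\mu\eta/3)^j\bigl(\nu + O(\varepsilon_0^2/\mu)\bigr)$, and when summed against $m_j\tau$ (which grows at the reciprocal geometric rate) the dominant term is the last epoch, of size $\widetilde O(\sqrt{T\tau})$.

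The main obstacle — and the technical crux of the paper — is bounding (B) by a quantity only \emph{quadratic} in the radius $r_j$: a naive Lipschitz bound from \cref{lemma:fazel}(4) produces a linear-in-$r_j$ term per sub-epoch and forces the $T^{2/3}$ rate of prior works. To do better I Taylor-expand using local smoothness,
\[
J(K_{j,i}) - J(K_j) \;\le\; \langle\nabla J(K_j),\, r_j U_{j,i}\rangle + \tfrac{\beta}{2}\,r_j^2.
\]
Summed over $i$ the linear piece is a mean-zero bounded martingale, so a vector Azuma bound controls it by $r_j\|\nabla J(K_j)\|\sqrt{m_j\log(1/\delta)}$; using $\|\nabla J(K_j)\|^2 \le 2\beta(J(K_j)-J_\star)$ (a consequence of smoothness) together with AM-GM, this piece is absorbed into (C), leaving a residual of order $m_j\beta r_j^2$ per epoch. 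Since $m_jr_j^2$ grows geometrically in $j$ and is dominated by the last epoch with $m_n\tau\asymp T$ and $r_n^2\propto 1/\sqrt{m_n}$, both (B) and (C) contribute $\widetilde O(\sqrt{T\tau})$; plugging in the constants from \cref{lemma:fazel,lemma:costToStability} recovers the bound of \cref{thm:main}.
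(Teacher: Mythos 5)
Your decomposition into (A) mixing, (B) exploration, and (C) optimization is exactly the paper's, and your treatment of (B) and (C) is sound: for (B), using $\EE[K_{j,i}\mid K_j]=K_j$ together with local smoothness to get a $\tfrac{\beta}{2}r_j^2$ deterministic term plus a mean-zero fluctuation is precisely the argument of \cref{lemma:KjiToKj} (the paper bounds the fluctuation by Hoeffding on $J(K_{j,i})\le\nu$ rather than via $\|\nabla J(K_j)\|$, but both suffice); for (C), your noisy-gradient recursion with the induction keeping $K_j\in\Jdomain$ matches \cref{thm:corruptedConvergence,lemma:GradEstBound,lemma:exploitCost}.

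The genuine gap is in term (A), which you dismiss as routine but which is where the paper's main technical contribution lives. The issue is not the rounds late in each sub-epoch (where \cref{lemma:steadyState} indeed makes the deviation negligible) but the burn-in at the \emph{start} of each sub-epoch: by the telescoping identity of \cref{lemma:dCi}, the systematic part of (A) reduces to $\sum_{i}\EE\bigl[x_{i,1}\tran(P_{K_{j,i}}-P_{K_{j,i-1}})x_{i,1}\bigr]$, and the naive bound $\|P_{K_{j,i}}-P_{K_{j,i-1}}\|\lesssim\|K_{j,i}-K_{j,i-1}\|\lesssim r_j$ from \cref{lemma:fazel} is only \emph{linear} in $r_j$. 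Summed over the $m_j$ switches this gives $m_jr_j\propto\rho^{-3j/2}\asymp(T_j/\tau)^{3/4}$, which accumulates to $T^{3/4}$ and destroys the $\sqrt T$ rate — an Azuma argument controls only the martingale fluctuations around this conditional expectation, not the expectation itself. The paper's fix is \cref{lemma:singleSwitch}: because $K_{j,i-1}$ and $K_{j,i}$ are i.i.d.\ given $K_j$, one can symmetrize, $\EE\tr\bigl((P_{K_2}-P_{K_1})\Sigma_{K_1}\bigr)=\tfrac12\EE\tr\bigl((P_{K_2}-P_{K_1})(\Sigma_{K_1}-\Sigma_{K_2})\bigr)$, turning the bound into a product of two $O(r_j)$ differences and hence $O(r_j^2)$ per switch, i.e., $m_jr_j^2\asymp\sqrt{m_j}$ per epoch. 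Without this exchangeability argument (or an equivalent second-order cancellation), your claimed $\widetilde O(\sqrt{T})$ bound on (A) does not follow.
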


Here we give an overview of the main steps in proving \cref{thm:main}, deferring the details of each step to later sections.
Our first step is analyzing the utility of the policies $\Kj$ computed at the end of each epoch. We show that the regret of each $\Kj$ (over epoch $\ind0$) in terms of its long-term (steady state) cost compared to that of the optimal $\Kstar$, is controlled by the inverse square-root of the epoch length $\mj$.

\begin{lemma}[exploitation]
\label{lemma:exploitCost}
    Under the parameter choices of \cref{thm:main}, for any $\ind0 \ge 0$ we have that with probability at least $1 - \delta/8T^2$,
    \begin{align*}
        \Jof{\Kj}
        -
        \Jstar
        =
        \Oof{\costBound \sqrt{\frac{\mj[0]}{\mj}}}
        =
        \Oof{
        {\du \dx^{3/2} \systemBound^2 \kappa^{22} \noiseBound^2} \sqrt{\frac{1}{\mj} \log \frac{T}{\delta}}
        }
        ,
    \end{align*}
    and further that $\Jof{\Kj} \le \costBound / 2$.
\end{lemma}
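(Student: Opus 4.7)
The plan is to prove the claim by induction on the epoch index $\ind0$, carrying forward two inductive statements: (i) $\Jof{\Kj}\le\costBound/2$, ensuring $\Kj \in \Jdomain$ so that the regularity guarantees of \cref{lemma:fazel} apply; and (ii) the quantitative bound $\Jof{\Kj} - \Jstar \le O\brk*{\costBound (1-\pPL\eta/3)^{\ind0}}$, which is equivalent to the claim since $(1-\pPL\eta/3)^{\ind0} = \sqrt{\mj[0]/\mj}$ by definition of the schedule $\mj = \mj[0](1-\pPL\eta/3)^{-2\ind0}$. The base case follows from $\Jof{K_0}\le\costBound/4$.

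The core of the inductive step is to control the gradient estimate $\gj$ on a high-probability event and then invoke a PL-based inexact descent recursion. Let $\J^{\rj}$ denote the Frobenius-ball smoothing of $\J$ at radius $\rj$. I would decompose
\begin{align*}
    \gj - \nabla \Jof{\Kj}
    =
    \brk*{\gj - \EE\brk[s]*{\gj \mid \Kj}}
    +
    \brk*{\EE\brk[s]*{\gj \mid \Kj} - \nabla \J^{\rj}\brk*{\Kj}}
    +
    \brk*{\nabla \J^{\rj}\brk*{\Kj} - \nabla \Jof{\Kj}}
    .
\end{align*}
The smoothing term is bounded by $\pSmooth\rj$ via \cref{lemma:smoothFuncProperties} with ambient dimension $\dx\du$. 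The mixing term captures the gap between $\EE\brk[s]*{\cjitau}$ and $\Jof{\Kji}$; with $\tau = 2\kappa^2\log(7\kappa T)$ and the exponential mixing of \cref{lemma:steadyState}, it is $O(1/T^2)$ and hence negligible. The concentration term is the zero-mean average $\frac{\dx\du}{\rj\mj}\sum_{\ind1}(\cjitau - \EE\brk[s]*{\cjitau})\uji$; combining the uniform cost bound from \cref{lemma:stateBound} with a vector Azuma/Hoeffding inequality over the $\mj$ sub-epochs, this is $\OtildeOf{\dx\du\costBound\kappa^8\noiseBound^2/(\rj\noiseMinStd^2\sqrt{\mj})}$ with probability at least $1-\delta/8T^2$.

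For the recursion, local $\pSmooth$-smoothness of $\J$ around $\Kj$ gives the standard inexact PL step
\begin{align*}
    \Jof{\Kj[\ind0+1]} - \Jstar
    \le
    (1-\pPL\eta)\brk*{\Jof{\Kj} - \Jstar}
    +
    \eta\norm{\gj - \nabla \Jof{\Kj}}^2
    ,
\end{align*}
provided $\eta\pSmooth\le 1$ and the step $\eta\gj$ stays inside the local smoothness ball of radius $\pRadius$ (both verifiable from the prescribed parameters). Substituting the three-term error bound and plugging in the schedules for $\rj$ and $\mj$, the squared error is designed to be at most $\tfrac{\pPL}{3}\costBound(1-\pPL\eta/3)^{\ind0}$, so the recursion closes at rate $1-\pPL\eta/3$ and yields (ii). Invariant (i) follows because the decay from $\Jof{K_0}\le\costBound/4$ always leaves slack within $\costBound/2$. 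A union bound over the $O(\log T)$ epochs with per-epoch failure probability $\delta/8T^2$ concludes the argument.

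The main obstacle is the coupling between the two induction statements: controlling the mixing and concentration terms requires every perturbed controller $\Kji=\Kj+\rj\uji$ to remain in $\Jdomain$, which forces $\rj$ to be small relative to the Lipschitz constant from \cref{lemma:fazel}; yet the $1/\rj$ factor in the concentration error pushes in the opposite direction. The exponential schedule $\rj = \rj[0](1-\pPL\eta/3)^{\ind0/2}$ together with $\mj = \mj[0](1-\pPL\eta/3)^{-2\ind0}$ is tuned precisely so that $\pSmooth\rj$ and $1/(\rj\sqrt{\mj})$ decay at the same rate, allowing the constants to align and the $\sqrt{\mj[0]/\mj}$ rate to emerge cleanly.
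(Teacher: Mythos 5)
Your proposal is correct and follows essentially the same route as the paper: an induction that couples the invariant $\Jof{\Kj}\le\costBound/2$ with the quantitative rate, a three-term decomposition of the gradient error into smoothing bias ($\pSmooth\rj$), mixing bias (exponentially small via \cref{lemma:steadyState}), and a martingale concentration term of order $\dx\du\costBound\kappa^8\noiseBound^2/(\rj\noiseMinStd^2\sqrt{\mj})$, followed by an inexact PL descent recursion whose contraction closes at rate $1-\pPL\eta/3$ because the schedules for $\rj$ and $\mj$ make the two error sources decay in lockstep. The only cosmetic difference is that the paper packages the descent recursion as a standalone corrupted-gradient theorem in max form, whereas you state it as an additive one-step inequality and unroll it directly.
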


The proof of the lemma is based on a careful analysis of gradient descent with inexact gradients and crucially exploits the PL and local-smoothness properties of the loss~$\Jof{\cdot}$. More details can be found in \cref{sec:optimization}.

The more interesting (and challenging) part of our analysis pertains to controlling the costs associated with exploration, namely, the penalties introduced by the perturbations of the controllers~$\Kj$.
The direct cost of exploration is clear: instead of playing the $\Kj$ intended for exploitation, the algorithm actually follows the perturbed controllers $K_{\ind0, \ind1}$ and thus incurs the differences in long-term costs $\Jof{K_{\ind0, \ind1}} - \Jof{\Kj}$.
Our following lemma bounds the accumulation of these penalties over an epoch $\ind0$; importantly, it shows that while the bound scales linearly with the length of the epoch $\mj$, it has a \emph{quadratic} dependence on the exploration radius $\rj$.

\begin{lemma}[direct exploration cost]
\label{lemma:KjiToKj}
    Under the parameter choices of \cref{thm:main}, for any $\ind0 \ge 0$ we have that with probability at least $1 - \delta/4T$,
    \begin{align*}
        \sum_{\ind1 = 1}^{\mj} \Jof{K_{\ind0, \ind1}} - \Jof{\Kj}
        =
        \Oof{
        \frac{\sqrt{\dx} \costBound \systemBound^2 \kappa^{8}}{\costMatLower} \rj^2 \mj
        +
        \costBound \sqrt{\mj \log \frac{T}{\delta}}
        }
        .
    \end{align*}
\end{lemma}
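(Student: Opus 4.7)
The plan is to decompose each per-step exploration cost $\Jof{\Kji} - \Jof{\Kj}$ using local smoothness of $J$ into a first-order term $\innProd{\nabla \Jof{\Kj}, \rj \uji}$ and a second-order remainder bounded by $\pSmooth \rj^2 / 2$. Because $\uji$ is uniform on the unit Frobenius sphere, the first-order term has conditional mean zero, so summing over the epoch yields a deterministic contribution that is \emph{quadratic} in $\rj$ plus a martingale-type sum that concentrates by Hoeffding---precisely the two terms in the claimed bound.

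First, I would verify that $\Kji \in \Jdomain$ so that the regularity properties of $J$ apply. On the event of \cref{lemma:exploitCost}, which holds with probability at least $1 - \delta/8T^2$, we have $\Jof{\Kj} \le \costBound/2$ and $\Kj \in \Jdomain$. Since $\norm{\uji}_F = 1$ and $\rj \le \rj[0] \le \pRadius$ under the chosen parameters, the local Lipschitz property (item 4 of \cref{lemma:fazel}) yields $\Jof{\Kji} \le \Jof{\Kj} + \pLip \rj \le \costBound$ after substituting the constants, so $\Kji \in \Jdomain$ and in particular $\norm{\nabla \Jof{\Kj}}_F \le \pLip$.

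Second, I invoke local smoothness (item 5 of \cref{lemma:fazel}) to write
\begin{align*}
    \Jof{\Kji} - \Jof{\Kj}
    \le
    \innProd{\nabla \Jof{\Kj}, \rj \uji}
    +
    \frac{\pSmooth}{2} \rj^2 ,
\end{align*}
which holds since $\norm{\Kji - \Kj}_F = \rj \le \pRadius$. Summing over $\ind1 = 1, \ldots, \mj$, the deterministic piece contributes $\pSmooth \rj^2 \mj / 2 = \Oof{\sqrt{\dx} \costBound \systemBound^2 \kappa^8 \rj^2 \mj / \costMatLower}$, matching the first term of the claim after substituting the value of $\pSmooth$.

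Third, conditional on $\filt[\ind0]$, the matrix $\Kj$ and its gradient $\nabla \Jof{\Kj}$ are deterministic, while the $\uji$ are i.i.d.\ uniform on the unit Frobenius sphere with mean zero. Each summand $\innProd{\nabla \Jof{\Kj}, \rj \uji}$ is therefore conditionally mean-zero and, by Cauchy--Schwarz, bounded in magnitude by $\rj \pLip$, which is at most $\costBound$ by the choice of $\rj[0]$. Conditional Hoeffding then bounds the sum by $\Oof{\costBound \sqrt{\mj \log(T/\delta)}}$ with probability at least $1 - \delta/8T$; a union bound with \cref{lemma:exploitCost} yields the overall probability $1 - \delta/4T$. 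The main conceptual point, and the reason the argument must be sharp here, is that smoothness combined with the symmetry of $\uji$ gives a \emph{quadratic} per-step exploration penalty in $\rj$; a naive Lipschitz bound would be linear in $\rj$ and would translate into $T^{2/3}$ rather than $\sqrt{T}$ regret.
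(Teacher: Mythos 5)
Your proof is correct and follows essentially the same route as the paper: local smoothness yields the $\tfrac{1}{2}\pSmooth\rj^2$ per-step penalty since $\EE\brk[s]{\uji} = 0$, and the remaining fluctuation is controlled by Hoeffding after conditioning on the event $\Jof{\Kj}\le\costBound/2$ from \cref{lemma:exploitCost}. The only (immaterial) difference is that you apply Hoeffding to the linear terms $\innProd{\nabla\Jof{\Kj}, \rj\uji}$, bounded by $\rj\pLip$, whereas the paper applies it to $\Jof{\Kji}-\EE\brk[s]{\Jof{\Kji}\mid\Kj}$, bounded by $\costBound$; both give the stated $\Oof{\costBound\sqrt{\mj\log(T/\delta)}}$ term.
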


There are additional, indirect costs associated with exploration however: within each epoch the algorithm switches frequently between different policies, thereby suffering the indirect costs that stem from their ``burn-in'' period. This is precisely what gives rise to the differences between the realized cost $\cjis$ and the long-term cost $\Jof{\Kji}$ of the policy $\Kji$, the cumulative effect of which is bounded in the next lemma.
Here again, note the quadratic dependence on the exploration radius $\rj$ which is essential for obtaining our~$\sqrt{T}$-regret result.

\begin{lemma}[indirect exploration cost]
\label{lemma:lqrSwitch}
    Under the parameter choices of \cref{thm:main}, for any $\ind0 \ge 0$ we have that with probability at least $1 - \delta/4T$,
    \begin{align*}
        &\sum_{\ind1 = 1}^{\mj}
        \sum_{\ind2 = 1}^{\tau}
        \brk!{\cjis - \Jof{\Kji}}
        =
        \Oof{
        \frac{\costBound \kappa^8 \noiseBound^2}{\noiseMinStd^2}  \tau \sqrt{\mj \log \frac{T}{\delta}} 
        +
        \frac{\dx \costBound \systemBound^2  \kappa^{10}}{\costMatLower}
        \mj \rj^2
        }
        .
    \end{align*}
\end{lemma}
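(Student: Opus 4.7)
The plan is a within-sub-epoch Bellman telescoping followed by a cross-sub-epoch telescoping that exploits the zero-mean symmetry of the perturbations $\uji$ to recover the crucial $\rj^2$ scaling.

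Within each sub-epoch $\ind1$, where the dynamics evolve under the fixed $\Kji$, the Bellman identity $\costToGoOf[\Kji] = Q + \Kji\tran R\Kji + A_{\Kji}\tran \costToGoOf[\Kji] A_{\Kji}$ (with $A_{\Kji} := \Astar + \Bstar\Kji$) allows me to rewrite
\begin{align*}
\cjis = (x^{\ind1}_{\ind2})\tran \costToGoOf[\Kji] x^{\ind1}_{\ind2} - (x^{\ind1}_{\ind2+1})\tran \costToGoOf[\Kji] x^{\ind1}_{\ind2+1} + 2 w_{\ind2}\tran \costToGoOf[\Kji] A_{\Kji} x^{\ind1}_{\ind2} + w_{\ind2}\tran \costToGoOf[\Kji] w_{\ind2}.
\end{align*}
Summing this identity over $\ind2 \in \brk[c]{1,\dots,\tau}$ and subtracting $\tau\Jof{\Kji} = \tau\tr{\costToGoOf[\Kji]\noiseCov}$, the in-sub-epoch sum decomposes into two boundary contributions in $\costToGoOf[\Kji]$ and a noise-martingale-difference $N_{\ind1}$. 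Using $\norm{\costToGoOf[\Kji]}\le\costBound/\noiseMinStd^2$, $\norm{w}\le\noiseBound$, $\norm{A_{\Kji}}\le\kappa$, and $\norm{x}\le\Oof{\kappa^4\noiseBound}$ from \cref{lemma:stateBound}, each summand of $N_{\ind1}$ is $\Oof{\costBound\kappa^8\noiseBound^2/\noiseMinStd^2}$; grouping the $\tau$ summands per sub-epoch into a bounded random variable and applying Azuma--Hoeffding across the $\mj$ sub-epochs delivers the first claimed concentration term.

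Telescoping the boundary contributions across sub-epochs using state continuity $x^{\ind1}_{\tau+1} = x^{\ind1+1}_1$ reduces them to $\Oof{1}$ endpoints plus
\begin{align*}
\sum_{\ind1=2}^{\mj} (x^{\ind1}_1)\tran\brk*{\costToGoOf[\Kji] - \costToGoOf[\Kji[\ind0,\ind1-1]]}x^{\ind1}_1.
\end{align*}
A direct application of the Lipschitz bound on $K\mapsto\costToGoOf[K]$ from \cref{lemma:fazel} (item 3) yields only $\Oof{\mj\rj}$, which is insufficient; the heart of the argument is to recenter around $\Kj$ by writing $\costToGoOf[\Kji] - \costToGoOf[\Kji[\ind0,\ind1-1]] = (\costToGoOf[\Kji] - \costToGoOf[\Kj]) - (\costToGoOf[\Kji[\ind0,\ind1-1]] - \costToGoOf[\Kj])$ and bounding the two resulting sums separately. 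In the first, $x^{\ind1}_1$ is independent of $\uji$ (drawn fresh at the start of sub-epoch $\ind1$); taking conditional expectation and invoking a second-order Taylor expansion of $K\mapsto\costToGoOf[K]$ around $\Kj$, the first-order term vanishes since $\EE\brk[s]{\uji}=0$, leaving an $\Oof{\rj^2\norm{x^{\ind1}_1}^2}$ expected contribution per sub-epoch. The second sum, in which both $\costToGoOf[\Kji[\ind0,\ind1-1]]$ and $x^{\ind1}_1$ depend on $\uji[\ind0,\ind1-1]$, is handled analogously after further Taylor-expanding $x^{\ind1}_1$ around its unperturbed counterpart under $\Kj$; the leading $\Oof{\rj}$ contribution again averages to $\Oof{\rj^2}$ thanks to $\EE\brk[s]{\uji[\ind0,\ind1-1]}=0$. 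Summing over $\mj$ sub-epochs gives the desired $\Oof{\mj\rj^2}$ expectation, while the residual bounded-difference martingales of magnitude $\Oof{\rj}$ contribute an $\Oof{\rj\sqrt{\mj\log(T/\delta)}}$ fluctuation by Azuma that is absorbed by the first claimed term under our choice of $\rj$.

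The main obstacle is a quantitative second-order bound on the Hessian of $K\mapsto\costToGoOf[K]$, which is not explicit in \cref{lemma:fazel} and must be derived by differentiating its defining Bellman equation twice in $K$ and inverting the resulting operator equation using the strong stability of $A_K$. The dimension factor $\dx$ and multiplicative constant $\systemBound^2\kappa^{10}/\costMatLower$ arise respectively from trace--operator-norm conversions, the two $\Bstar$ factors appearing in the Hessian, and the mixing exponent $\gamma^{-1}\asymp\kappa^2$ combined with the constants of \cref{lemma:fazel}. A union bound over the $\mj\le T$ sub-epochs at failure probability $\Oof{\delta/T^2}$ then completes the high-probability claim.
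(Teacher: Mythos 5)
Your overall scaffolding matches the paper's: the within-sub-epoch Bellman telescoping is exactly the paper's \cref{lemma:dCi}, the Azuma step over the resulting martingale differences gives the first term, and the cross-sub-epoch telescoping isolates $\sum_{\ind1}x_{\ind1,1}\tran(\costToGoOf[{\Kji[\ind1]}]-\costToGoOf[{\Kji[\ind1-1]}])x_{\ind1,1}$ as the crux. Where you genuinely diverge is in how you extract the $\rj^2$ scaling from that sum. You recenter at $\Kj$ and run a second-order Taylor expansion of $K\mapsto\costToGoOf[K]$, killing the first-order term via $\EE\brk[s]{\uji}=0$; this forces you to establish a quantitative Hessian bound for $\costToGoOf[K]$ (which \cref{lemma:fazel} does not provide and which you only sketch), plus a state-perturbation expansion of $x_{\ind1,1}$ around its unperturbed counterpart for the term where $x_{\ind1,1}$ and $\costToGoOf[{\Kji[\ind1-1]}]$ are correlated. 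The paper instead uses an exchangeability trick (\cref{lemma:singleSwitch}): after replacing $\EE\brk[s]{x_\tau x_\tau\tran\mid K_1}$ by $\steadyCov[K_1]$ up to an exponentially small mixing error, the i.i.d.\ symmetry of $K_1,K_2$ gives $\EE\tr{(\costToGoOf[K_2]-\costToGoOf[K_1])\steadyCov[K_1]}=\tfrac12\EE\tr{(\costToGoOf[K_2]-\costToGoOf[K_1])(\steadyCov[K_1]-\steadyCov[K_2])}$, which is $\Oof{\rj^2}$ using only the two first-order Lipschitz bounds on $\costToGoOf[K]$ and $\steadyCov[K]$ already in \cref{lemma:fazel}; no second derivatives, no zero-mean property of $\uji$, and no expansion of the state are needed. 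Your route is viable and arguably more robust (it would survive non-identically-distributed perturbations as long as they are zero-mean), but it is not complete as written: the Hessian bound on $\costToGoOf[K]$ is the load-bearing ingredient and would need to be derived in full, whereas the paper's symmetrization makes the whole issue disappear.
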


The technical details for \cref{lemma:KjiToKj,lemma:lqrSwitch} are discussed in \cref{sec:explorationCost}.
We now have all the main pieces required for proving our main result.

\begin{proof}[of \cref{thm:main}]
Taking a union bound, we conclude that \cref{lemma:lqrSwitch,lemma:KjiToKj,lemma:exploitCost} hold for all $\ind0 \ge 0$ with probability at least $1 - \delta$.
Now, notice that our choice of parameters is such that
\begin{align*}
    \rj^2 \mj
    =
    \rj[0]^2 \sqrt{\mj[0] \mj}
    =
    \Oof{
    \frac{\sqrt{\dx} \du \costMatLower \noiseBound^2}{\systemBound^2 \noiseMinStd^2}
    \sqrt{\mj \log \frac{T}{\delta}}
    }
    .
\end{align*}
Plugging this back into \cref{lemma:lqrSwitch,lemma:KjiToKj} we get that for all $\ind0$,
\begin{align*}
    \sum_{\ind1 = 1}^{\mj}
    \sum_{\ind2 = 1}^{\tau}
    \brk!{ \cjis \!-\! \Jof{\Kji} }
    &=
    \Oof{
    \frac{\du \dx^{3/2} \costBound \kappa^{10} \noiseBound^2}{\noiseMinStd^2}  \tau \sqrt{\mj \log \frac{T}{\delta}} 
    }
    ,
    \\
    \tau \sum_{\ind1 = 1}^{\mj} \Jof{K_{\ind0, \ind1}} \!-\! \Jof{\Kj}
    &=
    \Oof{
    \frac{\du \dx \costBound \kappa^{8} \noiseBound^2}{ \noiseMinStd^2}
    \tau \sqrt{\mj \log \frac{T}{\delta}}
    }
    .
\end{align*}
We conclude that the regret during epoch $\ind0$ is bounded as
\begin{align*}
    \sum_{\ind1 = 1}^{\mj}
    \sum_{\ind2 = 1}^{\tau}
    \brk!{ \cjis - \Jstar }
    &=
    \brk[s]*{\sum_{\ind1 = 1}^{\mj}
    \sum_{\ind2 = 1}^{\tau}
    \brk!{ \cjis - \Jof{\Kji} }
    }
    +
    \brk[s]*{\tau \sum_{\ind1 = 1}^{\mj} \Jof{K_{\ind0, \ind1}} - \Jof{\Kj}}
    +
    \brk[s]{\tau \mj (\Jof{\Kj} - \Jstar)}
    \\
    &=
    \Oof{
    {\du \dx^{3/2} \systemBound^2 \kappa^{22} \noiseBound^2} \tau \sqrt{\mj \log \frac{T}{\delta}}
    }
    ,
\end{align*}
where the second step also used the fact that $\costBound / \noiseMinStd^2 \le \kappa^2$.
Finally, a simple calculation (see \cref{lemma:mjSum}) shows that
\begin{align*}
    \sum_{\ind0=0}^{\nEpochs-1} \sqrt{\mj}
    =
    \Oof{\frac{1}{\pPL \eta}\sqrt{T / \tau}}
    =
    \Oof{
    \frac{\systemBound^2 \kappa^{14}}{\costMatLower} \sqrt{T / \tau}
    }
    ,
\end{align*}
and thus summing over the regret accumulated in each epoch concludes the proof.
\end{proof}

\section{Optimization Analysis}
\label{sec:optimization}

At its core, \cref{alg:main} is a policy gradient method with $\Kj$ being the prediction after $\ind0$ gradient steps. In this section we analyze the sub-optimality gap of the underlying controllers $\Kj$ culminating in the proof of \cref{lemma:exploitCost}. To achieve this, we first consider a general optimization problem with a corrupted gradient oracle, and show that the optimization rate is limited only by the square of the corruption magnitude. We follow this with an analysis of the LQR gradient estimation from which the overall optimization cost follows readily.

\subsection{Inexact First-Order Optimization}

Let $\cost : \RR[d] \to \RR$ be a function with global minimum $\costMin > -\infty$. Suppose there exists $\costLevel \in \RR$ such that $\cost$ is $\pPL$-\emph{PL}, $(\pRadius, \pSmooth)$-\emph{locally smooth}, and $(\pRadius,\pLip)$-\emph{locally Lipschitz} over the sub-level set
$
\fDomain
=
\brk[c]{x \mid \costOf{x} \le \costLevel}
.
$
We consider the update rule
\begin{align}
\label{eq:pgUpdateRule}
    x_{t+1}
    =
    x_t - \eta \gradOracle[t]
    ,
\end{align}
where $\costOf{x_0} \le \costLevel$, and $\gradOracle[t] \in \RR[d]$ is a corrupted gradient oracle that satisfies
\begin{align}
\label{eq:gradCorruption}
	\norm{\gradOracle[t] - \nabla \costOf{x_t}}
	\le
	\pCorrupt[t]
	,
\end{align}
where $\pCorrupt \le \min\brk[c]{\pLip,\sqrt{(\costLevel - \costMin)\pPL} / 2}$ is the magnitude of the corruption at step~$t$.
Define the effective corruption up to round $t$ as
\begin{align*}
    \pEffCorrupt^2
    =
    \max_{s \le t}\brk[c]*{
        \pCorrupt[s]^2
        \brk[s]{1 - (\pPL \eta / 3)}^{t-s}
    }
    ,
\end{align*}
and notice that if 
$
    \pCorrupt[s] \brk[s]{1 - (\pPL \eta / 3)}
    \le
    \pCorrupt[s+1]
$
then $\pEffCorrupt = \pCorrupt$.

The following result shows that this update rule achieves a linear convergence rate up to an accuracy that depends quadratically on the corruptions. 

\begin{theorem}[corrupted gradient descent]
\label{thm:corruptedConvergence}
	Suppose that 
	$
	\eta
	\le
	\min\brk[c]{
	    \ifrac{1}{\pSmooth}
	    ,
	    \ifrac{4}{\pPL}
	    ,
	    \ifrac{\pRadius}{2\pLip}
	    .
    }
    $
    Then for all $t \geq 0$,
	\begin{align*}
		\costOf{x_t} - \costMin
		\le
		\max\brk[c]*{
        \frac{4\pEffCorrupt[t-1]^2}{\pPL}
        ,
        \brk[s]*{1 - \frac{\pPL\eta}{3}}^t \brk*{\costOf{x_0} - \costMin}
        }
		,
	\end{align*}
	and consequently $x_t \in \fDomain$.
\end{theorem}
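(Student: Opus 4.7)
My plan is to proceed by induction on $t$, maintaining the stronger invariant that $x_t \in \mathcal{X}$ alongside the stated bound on $f(x_t) - f_*$. The base case $t=0$ is immediate from $f(x_0) \le \bar f$. For the inductive step, I first check that the gradient step keeps us inside the neighborhood where local smoothness applies: since $x_t \in \mathcal{X}$, local Lipschitzness gives $\|\nabla f(x_t)\| \le G$; combined with the corruption bound $\varepsilon_t \le G$ and $\eta \le D_0/(2G)$, this yields $\|x_{t+1} - x_t\| = \eta\|\hat g_t\| \le 2\eta G \le D_0$, so local smoothness is indeed valid at $x_t$.

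Writing $\hat g_t = \nabla f(x_t) + w_t$ with $\|w_t\| \le \varepsilon_t$ and using $\eta \le 1/\beta$ to replace $\beta \eta^2/2$ by $\eta/2$, the smoothness descent inequality expands to
\begin{align*}
    f(x_{t+1})
    \le
    f(x_t) - \eta\langle \nabla f(x_t), \hat g_t\rangle + \tfrac{\eta}{2}\|\hat g_t\|^2
    = f(x_t) - \tfrac{\eta}{2}\|\nabla f(x_t)\|^2 + \tfrac{\eta}{2}\|w_t\|^2 ,
\end{align*}
where the two cross terms in $\langle \nabla f(x_t), w_t\rangle$ cancel exactly. Invoking PL then yields the corrupted one-step recursion
\begin{align*}
    f(x_{t+1}) - f_*
    \le
    \bigl(1 - \tfrac{\mu\eta}{2}\bigr)\bigl(f(x_t) - f_*\bigr) + \tfrac{\eta}{2}\varepsilon_t^2 .
\end{align*}

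The remaining task is to upgrade this additive recursion to the stated max-form bound with the slightly weaker rate $1 - \mu\eta/3$. The key identity I will use is $\bar\varepsilon_t^2 = \max\{(1 - \mu\eta/3)\,\bar\varepsilon_{t-1}^2,\; \varepsilon_t^2\}$, which follows directly from the definition. I split on which branch of the inductive hypothesis dominates $f(x_t) - f_*$: when the corruption term $4\bar\varepsilon_{t-1}^2/\mu$ dominates, substituting into the recursion and further splitting on which of the two terms defining $\bar\varepsilon_t^2$ is larger gives $f(x_{t+1}) - f_* \le 4\bar\varepsilon_t^2/\mu$; when the geometric term $(1-\mu\eta/3)^t(f(x_0)-f_*)$ dominates, either the contraction persists at $t+1$ (with the gap between the two rates absorbing $\tfrac{\eta}{2}\varepsilon_t^2$) or else $\varepsilon_t^2$ is large enough that the bound collapses into the corruption branch. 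The elementary inequalities required in each sub-case hold precisely because of the $\mu\eta/6$ slack between the actual rate $\mu\eta/2$ and the advertised rate $\mu\eta/3$ --- this is exactly what the choice of the factor $1/3$ buys.

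Closing the induction then amounts to verifying $x_{t+1} \in \mathcal{X}$, for which it suffices that both branches of the max are bounded by $\bar f - f_*$: the geometric term satisfies $(1-\mu\eta/3)^{t+1}(f(x_0)-f_*) \le f(x_0) - f_* \le \bar f - f_*$, and the corruption term satisfies $4\bar\varepsilon_t^2/\mu \le 4\max_{s\le t} \varepsilon_s^2/\mu \le \bar f - f_*$ by the standing assumption $\varepsilon_s \le \sqrt{(\bar f - f_*)\mu}/2$. The main obstacle is the case analysis described above: the book-keeping needed to pass from the clean additive recursion to the max form with rate $1-\mu\eta/3$ requires exploiting the full slack between $\mu\eta/2$ and $\mu\eta/3$ in every sub-case, and this is where essentially all of the non-trivial algebra concentrates.
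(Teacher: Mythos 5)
Your proposal is correct and follows essentially the same route as the paper's proof: induction maintaining $x_t \in \fDomain$, the step-size conditions used to validate local smoothness, a smoothness-plus-PL one-step descent bound, and a case split between the corruption-dominated and geometric-decay regimes that exploits the slack between the true rate $1-\pPL\eta/2$ and the advertised rate $1-\pPL\eta/3$. The only (cosmetic) differences are that you bound $\pSmooth\eta \le 1$ before expanding so the cross terms cancel exactly---slightly cleaner than the paper's completing-the-square manipulation---and you run the induction directly on the unrolled max-form bound via the identity $\pEffCorrupt^2 = \max\{(1-\pPL\eta/3)\pEffCorrupt[t-1]^2, \pCorrupt^2\}$ rather than first establishing the one-step max recursion and then unrolling it.
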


\begin{proof}
For ease of notation, denote $\gradCorrpution = \gradOracle - \nabla \costOf{x_t}$.
Now, suppose that $x_t \in \fDomain$, i.e., $\costOf{x_t} \le \costLevel$. Then we have that
\begin{align*}
    \norm{x_{t+1} - x_t}
    =
    \norm{\gradOracle[t]} \eta 
    \le
    (\norm{\nabla\costOf{x_t}} + \pCorrupt[t]) \pRadius / 2 \pLip
    \le
    \pRadius
    ,
\end{align*}
where the second step used our choice of $\eta$ and the third step used the Lipschitz assumption on $x_t$ and the bound on $\pCorrupt[t]$.
We conclude that $x_t, x_{t+1}$ satisfy the conditions for local smoothness and so we have that
\begin{align*}
	\costOf{x_{t+1}} - \costOf{x_t}
	&\le 
	\nabla\costOf{x_t}\tran \brk*{x_{t+1} - x_t} + \frac{\pSmooth}{2}\norm{x_{t+1} - x_t}^2
	\\
	&=
	\nabla\costOf{x_t}\tran \brk*{-\eta \gradOracle[t]}
	+
	\frac{\pSmooth}{2}
	\norm{\eta \gradOracle[t]}^2 
	\\
	&= 
	\eta \brk[s]*{
	    -
	    \norm{\nabla\costOf{x_t}}^2
	    -
	    \nabla\costOf{x_t}\tran \gradCorrpution
	    +
	    \frac{\eta\pSmooth}{2}
	    \norm{\nabla\costOf{x_t} + \gradCorrpution}^2
    } 
	\\
	&=
	\eta \brk[s]*{-\brk*{1 - \frac{\eta\pSmooth}{2}}\norm{\nabla\costOf{x_t}}^2 - \brk*{1 - \eta\pSmooth}\nabla\costOf{x_t}\tran \gradCorrpution + \frac{\eta\pSmooth}{2}\norm{\gradCorrpution}^2} 
	\\
	&=
	\eta \brk[s]*{- \brk*{1 - \eta\pSmooth}\norm{\frac{1}{2}\nabla\costOf{x_t} + \gradCorrpution}^2 -\brk*{\frac{3}{4} - \frac{\eta\pSmooth}{4}}\norm{\nabla\costOf{x_t}}^2 + \brk*{1 - \frac{\eta\pSmooth}{2}}\norm{\gradCorrpution}^2}
	\\
	&\le
	\eta \brk[s]*{-\brk*{\frac{3}{4} - \frac{\eta\pSmooth}{4}}\norm{\nabla\costOf{x_t}}^2 + \brk*{1 - \frac{\eta\pSmooth}{2}}\norm{\gradCorrpution}^2},
\end{align*}
where the last transition holds by choice of $\eta\pSmooth \le 1$. 
Next, using the PL condition and the bound on $\gradCorrpution$ (see \cref{eq:gradCorruption}) we get that
\begin{align*}
	\costOf{x_{t+1}} - \costOf{x_t}
	&\le 
	\eta \brk[s]*{
	    -\pPL\brk*{\frac{3}{4} - \frac{\eta\pSmooth}{4}}
	    \brk*{\costOf{x_t} - \costMin}
	    +
	    \brk*{1 - \frac{\eta\pSmooth}{2}}\pCorrupt^2}
	,
\end{align*}
and adding $\costOf{x_t} - \costMin$ to both sides of the equation we get
\begin{align*}
    \costOf{x_{t+1}} - \costMin
    \le
    \brk[s]*{1 - \frac{\pPL\eta}{4}\brk*{3 - \eta\pSmooth}}\brk*{\costOf{x_t} - \costMin} + \brk*{1 - \frac{\eta\pSmooth}{2}}\eta\pCorrupt^2.
\end{align*}
Now, if $\frac{4\pCorrupt^2}{\pPL} \le \costOf{x_t} - \costMin$ then since $\eta\pSmooth \le 1$ we have that
\begin{align}
\label{eq:thm1res1}
\begin{aligned}
    \costOf{x_{t+1}} - \costMin
    &\le
    \brk[s]*{
        1
        -
        \frac{\pPL\eta}{4}\brk*{3 - \eta\pSmooth} 
        +
        \frac{\pPL\eta}{4}\brk*{1 - \frac{\eta\pSmooth}{2}}
    }
    \brk*{\costOf{x_t} - \costMin}
    \\
    &= 
    \brk[s]*{1 - \frac{\pPL\eta}{2} \brk*{1 - \frac{\eta\pSmooth}{4}}}\brk*{\costOf{x_t} - \costMin}
    \\
    &\le
    \brk[s]*{1 - \frac{\pPL\eta}{3}} \brk*{\costOf{x_t} - \costMin}
    .
\end{aligned}
\end{align}
On the other hand, if $\frac{4\pCorrupt^2}{\pPL} \ge \costOf{x_t} - \costMin \ge 0$ then we have that
\begin{align}
\label{eq:thm1res2}
\begin{aligned}
    \costOf{x_{t+1}} - \costMin
    &\le 
    \brk[s]*{\max\brk[c]*{
        0
        ,
        \frac{4}{\pPL}
        -
        \eta\brk*{3 - \eta\pSmooth}
    }
    +
    \eta\brk*{1 - \frac{\eta\pSmooth}{2}}
    }\pCorrupt^2
    \\
    &\le
    \max\brk[c]*{
    \eta
    ,
    \frac{4}{\pPL} - \eta\brk*{2 - \eta\pSmooth}
    }\pCorrupt^2
    \le
    \frac{4\pCorrupt^2}{\pPL}
    ,
\end{aligned}
\end{align}
where the last transition holds, again, by our choice of $\eta$.
Combining \cref{eq:thm1res1,eq:thm1res2} we conclude that
\begin{align}
\label{eq:convergenceSingle}
    \costOf{x_{t+1}} - \costMin
    \le 
    \max\brk[c]*{
    \frac{4\pCorrupt^2}{\pPL}
    ,
    \brk[s]*{1 - \frac{\pPL\eta}{3}} \brk*{\costOf{x_t} - \costMin}
    }
    .
\end{align}
In particular, this implies that
$
\costOf{x_{t+1}}
\le
\max\brk[c]*{
    \costMin + \frac{4\pCorrupt^2}{\pPL}
    ,
    \costOf{x_t}
    }
\le
\costLevel
,
$
and thus $x_{t+1} \in \fDomain$. Since we assume that $x_0 \in \fDomain$, this completes an induction showing that $x_t \in \fDomain$ for all $t \ge 0$. We can thus unroll \cref{eq:convergenceSingle} recursively to obtain the final result.
\end{proof}

\subsection{Gradient Estimation}

The gradient estimate $\gj$ is a batched version of the typical one-point gradient estimator. We bound it in the next lemma using the following inductive idea: if $\Jof{\Kj} \le \costBound / 2$, then $\Kji \in \Jdomain$ and standard concentration arguments imply that the estimation error is small with high probability and thus \cref{thm:corruptedConvergence} implies that $\Jof{\Kj[\ind0+1]} \le \costBound / 2$.
\begin{lemma}[Gradient estimation error]
\label{lemma:GradEstBound}
    Under the parameter choices of \cref{thm:main}, for any $\ind0 \ge 0$ we have that with probability at least $1 - (\delta / 8T^3)$,
	\begin{align*}
		\norm{\gj - \nabla\Jof{\Kj}}_F
		\le
		\frac{\sqrt{\pPL \costBound}}{4}
		\brk*{1 - \frac{\pPL \eta}{3}}^{\ind0 / 2}
		.
	\end{align*}
\end{lemma}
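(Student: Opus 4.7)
The proof will proceed by induction on $\ind0$, working under the hypothesis (jointly with \cref{lemma:exploitCost}) that $\Jof{\Kj[j']} \le \costBound/2$ for every $j' \le \ind0$. Since our parameter choices force $\rj \le \pRadius/2$, the local-Lipschitz bound of \cref{lemma:fazel} gives $\Jof{\Kji} \le \costBound$, so every perturbed controller lies in $\Jdomain$ and is $(\kappa,\gamma)$-strongly stable via \cref{lemma:costToStability}. With this in hand, I would decompose the estimation error into three pieces,
\[
\gj - \nabla\Jof{\Kj}
=
\underbrace{\frac{\dx\du}{\mj\rj}\sum_{\ind1=1}^{\mj}\brk!{\cjitau - \Jof{\Kji}}\uji}_{E_1}
+
\underbrace{\frac{\dx\du}{\mj\rj}\sum_{\ind1=1}^{\mj}\Jof{\Kji}\uji - \nabla \JrOf[\rj]{\Kj}}_{E_2}
+
\underbrace{\nabla \JrOf[\rj]{\Kj} - \nabla\Jof{\Kj}}_{E_3},
\]
where $\JrOf[\rj]{\cdot}$ is the ball-smoothing of $\J$ at radius $\rj$ from \cref{eq:smoothedVer}.

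The smoothing-bias term $E_3$ is immediately bounded by $\pSmooth \rj$ via \cref{lemma:smoothFuncProperties}(2). For $E_2$, part (1) of that same lemma identifies the expectation of each summand as $(\rj / \dx\du) \nabla \JrOf[\rj]{\Kj}$; the $\mj$ summands are i.i.d.\ with Frobenius norm bounded by $\costBound$ (since $\Kji \in \Jdomain$), so a vector Hoeffding-type inequality yields $\norm{E_2}_F = \Oof{(\dx\du\costBound/\rj)\sqrt{\log(T/\delta)/\mj}}$ with the stated probability. For $E_1$, I would further split each term into a martingale-difference part $\cjitau - \EEBrk{\cjitau \mid \filt[\ind1-1]}$ and a mixing-bias part $\EEBrk{\cjitau \mid \filt[\ind1-1]} - \Jof{\Kji}$, where $\filt[\ind1-1]$ denotes the history at the start of sub-epoch $\ind1$. \cref{lemma:stateBound} keeps every realized cost bounded by $O(\costBound\kappa^{8}\noiseBound^2/\noiseMinStd^2)$ throughout the run, so a vector Azuma inequality controls the martingale part at rate $1/\sqrt{\mj}$; meanwhile, \cref{lemma:steadyState} bounds the per-sub-epoch mixing bias by $O(\costBound\kappa^{10}\noiseBound^2/\noiseMinStd^2) \cdot e^{-2\gamma\tau}$, which by our choice $\tau = 2\kappa^2\log(7\kappa T)$ is $O(1/(\kappa T)^2)$ and therefore negligible.

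Collecting the three pieces yields a bound of the form
\[
\norm{\gj - \nabla\Jof{\Kj}}_F
=
\Oof{\pSmooth \rj + \frac{\dx\du\costBound\kappa^{8}\noiseBound^2}{\rj\noiseMinStd^2}\sqrt{\frac{\log(T/\delta)}{\mj}}}.
\]
The scalings $\rj = \rj[0](1-\pPL\eta/3)^{\ind0/2}$ and $\sqrt{\mj} = \sqrt{\mj[0]}(1-\pPL\eta/3)^{-\ind0}$ make both terms inherit a common geometric decay $(1-\pPL\eta/3)^{\ind0/2}$, and the specific values of $\rj[0]$ and $\mj[0]$ in \cref{thm:main} are precisely calibrated so that the leading constant comes out to at most $\sqrt{\pPL\costBound}/4$.

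The main obstacle I anticipate is the bookkeeping in $E_1$: the initial states of successive sub-epochs are correlated through the algorithm's past, so one must argue carefully with a filtration-based martingale decomposition and combine it with \cref{lemma:stateBound} (to keep states bounded throughout the entire run, not just within one sub-epoch) before \cref{lemma:steadyState} can be applied per sub-epoch. A closely related subtlety is that the inductive hypothesis $\Jof{\Kj} \le \costBound/2$ is exactly what licenses the admissibility of every perturbed controller, so this lemma and \cref{lemma:exploitCost} must be established together in a coupled induction driven by \cref{thm:corruptedConvergence}.
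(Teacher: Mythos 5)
Your proposal follows essentially the same route as the paper's proof: the same coupled induction on the event $\Jof{\Kj[\ind0']} \le \costBound/2$, the same three-way decomposition into smoothing bias, an i.i.d.\ concentration term for $\Jof{\Kji}\uji$, and a martingale-plus-mixing-bias term for $\cjitau - \Jof{\Kji}$ handled via \cref{lemma:stateBound,lemma:steadyState} and a vector Azuma inequality, with the same final calibration of $\rj[0]$ and $\mj[0]$. The only quibble is that the per-summand norm bound in your $E_2$ should be $\Oof{\dx\du\costBound/\rj}$ rather than $\costBound$, but since that is the scale you actually use in the resulting concentration bound, the argument is correct as the paper gives it.
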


\begin{proof}[of \cref{lemma:GradEstBound}]
Assume that conditioned on the event $\Jof{\Kj[\ind0]'} \le \costBound / 2$ for all $\ind0' \le \ind0$, the claim holds with probability at least $1 - \delta / 8 T^4$. 
We show by induction that we can peel-off the conditioning by summing the failure probability of each epoch. 
Concretely, we show by induction that the claim holds for all $\ind0' \le \ind0$ with probability at least $1 - \ind0 \delta / 8 T^4$. Since the number of epochs is less than $T$ (in fact logarithmic in $T$), this will conclude the proof.

The induction base follows immediately by our conditional assumption and the fact that $\Jof{\Kj[0]} \le \costBound / 4$. Now, assume the hypothesis holds up to $\ind0-1$. We show that the conditions of \cref{thm:corruptedConvergence} are satisfied with $\costLevel = \costBound / 2$ up to round $\ind0$, and thus $\Jof{\Kj[\ind0']} \le \costBound / 2$ for all $\ind0' \le \ind0$. We can then invoke our conditional assumption and a union bound to conclude the induction step.

We verify the conditions of \cref{thm:corruptedConvergence}. First, the Lipschitz, smoothness, and PL conditions hold by \cref{lemma:fazel}. Next,
notice that by definition $\Jstar \le \Jof{\Kj[0]} \le \costBound / 4$, and so by the induction hypothesis
$
    \norm{\gj[\ind0'] - \nabla\Jof{\Kj[\ind0']}}_F
    \le
    \sqrt{\costBound \pPL} / 4
    \le
    \sqrt{(\costLevel - \costMin)\pPL} / 2
    \le 
    \pLip
    ,
$
for all $\ind0' < \ind0$. Finally, noticing that $\kappa^2 > \dx$ it is easy to verify the condition on $\eta$.

It remains to show the conditional claim holds.
The event $\Jof{\Kj[\ind0']} \le \costBound / 2$ for all $\ind0' \le \ind0$ essentially implies that the policy gradient scheme did not diverge up to the start of epoch $\ind0$.
Importantly, this event is independent of any randomization during epoch $\ind0$ and thus will not break any i.i.d.~assumptions within the epoch.
Moreover, by \cref{lemma:fazel} and since $\rj[0] \le \costBound / 2 \pLip$, this implies that
$
    \Jof{\Kji[\ind0', \ind1]} \le \Jof{\Kj} + \pLip \rj \le \costBound
    ,
$
i.e., $\Kji[\ind0', \ind1] \in \Jdomain$ for all $\ind1$ and $\ind0' \le \ind0$.
For the remainder of the proof, we implicitly assume that this holds, allowing us to invoke \cref{lemma:steadyState,lemma:stateBound,lemma:fazel}. For ease of notation, we will not specify this explicitly. 

Now, let $\Jr$ be the smoothed version of $J$ as in \cref{eq:smoothedVer}.
Since $\rj \le \pRadius$ we can use \cref{lemma:smoothFuncProperties} to get that
\begin{align*}
    \norm{\gj - \nabla \Jof{\Kj}}_F
    &\le
    \norm{\gj - \nabla\JrOf{\Kj}}_F
    +
    \norm{\nabla\JrOf{\Kj} - \nabla\Jof{\Kj}}_F
    \\
    &\le
    \pSmooth \rj
    +
    \norm{\gj - \nabla\JrOf{\Kj}}_F
    ,
\end{align*}
Next, we decompose the remaining term using the triangle inequality to get that
\begin{align*}
	\norm{\gj - \nabla\JrOf{\Kj}}_F
	&=
	\norm*{
	    \frac{1}{\mj} \sum_{\ind1=1}^{\mj} \brk{
	    \frac{\dx \du}{\rj} \cjitau \uji
	    -
	    \nabla\JrOf{\Kj}
	    }
	}_F
	\\
	&\le
	\norm*{
	    \frac{1}{\mj} \sum_{\ind1=1}^{\mj} \brk{
	    \frac{\dx \du}{\rj} \Jof{\Kji} \uji
	    -
	    \nabla\JrOf{\Kj}
	    }
	}_F
	+
	\norm*{
	    \frac{1}{\mj} \sum_{\ind1=1}^{\mj} \brk{
	    \frac{\dx \du}{\rj} (\cjitau - \Jof{\Kji}) \uji
	    }
	}_F
	.
\end{align*}
By \cref{lemma:smoothFuncProperties}, we notice that, conditioned on $\Kj$, the first term is a sum of zero-mean i.i.d random vectors with norm bounded by $2 \du \dx \costBound / \rj$.
We thus invoke \cref{lemma:vectorAzuma} (Vector Azuma) to get that with probability at least $1 - \ifrac{\delta}{16T^4}$
\begin{align*}
    \norm*{
	    \frac{1}{\mj} \sum_{\ind1=1}^{\mj} 
	    \frac{\dx \du}{\rj} \Jof{\Kji} \uji
	    -
	    \nabla\JrOf{\Kj}
	}_F
    \le
    \frac{\du \dx \costBound}{\rj}
    \sqrt{\frac{8}{\mj} \log\frac{240 T^4}{\delta}}
    .
\end{align*}
Next, denote
$
    Z_{\ind1}
    =
    \frac{\dx \du}{\rj} (\cjitau - \Jof{\Kji}) \uji
    ,
$
and notice that the remaining term is exactly
$
    \norm{\frac{1}{\mj} \sum_{\ind1 = 1}^{\mj} Z_{\ind1}}_F
    .
$
Let $x_{\ind0, \ind1, \tau}$ be the state during the final round of playing controller $\Kji$, and $\filt[\ind1]$ be the filtration adapted to 
$
x_{\ind0, 1, \tau}
,
\ldots
,
x_{\ind0, \ind1, \tau}
,
\uji[\ind0, 1]
,
\ldots
,
\uji
.
$
We use \cref{lemma:steadyState} to get that
\begin{align*}
    \norm{\EE\brk[s]*{
    Z_{\ind1} \mid \filt[\ind1 - 1]
    }}_F
    &\le
    \EE\brk[s]*{\norm{\EE\brk[s]*{
    Z_{\ind1} \mid \filt[\ind1 - 1], \Kji
    }}_F \mid \filt[\ind1 - 1]}
    \\
    &\le
    \frac{\dx \du}{\rj}
    \EE\brk[s]*{
    \abs*{
        \EE\brk[s]*{\cjitau \mid \filt[\ind1 - 1], \Kji}
        -
        \Jof{\Kji}
    } \;\Big|\; \filt[\ind1 - 1]}
    \\
    &\le
    \frac{\dx \du \costBound \kappa^2}{\rj \noiseMinStd^2} e^{-2\gamma \tau}
    \EE\brk[s]*{
    \norm{x_{\ind0, \ind1 - 1, \tau} x_{\ind0, \ind1 - 1, \tau}\tran - \steadyCov[\Kji]} \;\Big|\; \filt[\ind1 - 1]}
    \\
    &\le
    \frac{37 \dx \du \costBound \kappa^{10} \noiseBound^2}{\rj \noiseMinStd^2} e^{-2\gamma \tau}
    \\
    &
    \le
    \frac{\dx \du \costBound \kappa^{8} \noiseBound^2}{\rj \noiseMinStd^2 T^2}
    ,
\end{align*}
where the last step plugged in the value of $\tau$ and the one before that used \cref{lemma:stateBound,lemma:fazel} to bound $\norm{\steadyCov[\Kji]} \le \costBound / \costMatLower = \kappa^2 \noiseMinStd^2$ and $\norm{x_{\ind0, \ind1 - 1, \tau}} \le 6 \kappa^4 \noiseBound$.
Further using \cref{lemma:stateBound} to bound $\cjitau$, we also get that
\begin{align*}
    \norm{
        Z_{\ind1}
        -
        \EE\brk[s]*{
            Z_{\ind1}
            \mid
            \filt[\ind1 - 1]
        }
    }_F
    \le
    \norm{Z_{\ind1}}_F
    +
    \norm{\EE\brk[s]*{Z_{\ind1} \mid \filt[\ind1 - 1]}}_F
    \le
    \frac{\dx \du \cjitau}{\rj}
    +
    \norm{\EE\brk[s]*{Z_{\ind1} \mid \filt[\ind1 - 1]}}_F
    \le
    \frac{37 \dx \du \costBound \kappa^{8} \noiseBound^2}{\rj \noiseMinStd^2}
    .
\end{align*}
Since $Z_{\ind1}$ is $\filt[\ind1]-$measurable we can invoke \cref{lemma:vectorAzuma} (Vector Azuma) to get that with probability at least $1 - \frac{\delta}{16T^4}$,
\begin{align*}
    \norm*{\frac{1}{\mj} \sum_{\ind1 = 1}^{\mj} Z_{\ind1}}_F
    &\le
    \frac{1}{\mj}\norm*{\sum_{\ind1 = 1}^{\mj}
        Z_{\ind1}
        -
        \EE\brk[s]*{
            Z_{\ind1}
            \mid
            \filt[\ind1 - 1]
        }
    }_F
    +
    \frac{1}{\mj}\sum_{\ind1 = 1}^{\mj} \norm{\EE\brk[s]*{
    Z_{\ind1} \mid \filt[\ind1 - 1]
    }}_F
    \\
    &\le
    \frac{\dx \du \costBound \kappa^{8} \noiseBound^2}{\rj \noiseMinStd^2}
    \brk[s]*{
        37 \sqrt{\frac{2}{\mj}\log\frac{240 T^4}{\delta}}
        +
        \frac{1}{T^2}
    }
    \\
    &\le
    \frac{54 \dx \du \costBound \kappa^{8} \noiseBound^2}{\rj \noiseMinStd^2}
        \sqrt{\frac{1}{\mj}\log\frac{240 T^4}{\delta}}
    .
\end{align*}
Using a union bound and putting everything together, we conclude that with probability at least $1 - (\delta / 8T^4)$,
\begin{align*}
    \norm{\gj - \nabla \Jof{\Kj}}_F
    &\le
    \pSmooth \rj 
    +
    \frac{54 \dx \du \costBound \kappa^{8} \noiseBound^2}{\rj \noiseMinStd^2}
    \sqrt{\frac{1}{\mj}\log\frac{240 T^4}{\delta}}
    \\
    &=
    \brk[s]*{
        \pSmooth \rj[0]
        +
        \frac{54 \dx \du \costBound \kappa^{8} \noiseBound^2}{\noiseMinStd^2 \rj[0] \mj[0]^{1/2}}
        \sqrt{\log\frac{240 T^4}{\delta}}
    }
    \brk*{1 - \frac{\pPL \eta}{3}}^{\ind0 / 2}
    \\
    &\le
    2\pSmooth \rj[0]
    \brk*{1 - \frac{\pPL \eta}{3}}^{\ind0 / 2}
    \\
    &\le
    \frac{\sqrt{\pPL \costBound}}{4}
    \brk*{1 - \frac{\pPL \eta}{3}}^{\ind0 / 2}
    ,
\end{align*}
where the last steps plugged in the values of $\pPL, \pSmooth, \rj[0]$, and $\mj[0]$.
\end{proof}

\subsection{Proof of \cref{lemma:exploitCost}}
\cref{lemma:exploitCost} is a straightforward consequence of the previous results.

\begin{proof}
    For $\ind0 = 0$ the claim holds trivially by our assumption that $\Jof{\Kj[0]} \le \costBound / 4$. Now, for $\ind0 \ge 1$,
    we use a union bound on \cref{lemma:GradEstBound} to get that with probability at least $1 - \delta / 8T^2$ 
    \begin{align*}
        \norm{\gj - \nabla\Jof{\Kj}}
		\le
		\frac{\sqrt{\pPL \costBound}}{4}
		\brk*{1 - \frac{\pPL \eta}{3}}^{\ind0 / 2}
        ,
		\qquad
        \forall \ind0 \ge 0
        .
    \end{align*}
    Then by \cref{thm:corruptedConvergence} we have that
    \begin{align*}
        \Jof{\Kj}
        \le
        \Jstar
        +
        \frac{\costBound}{4}
        \brk*{1 - \frac{\pPL \eta}{3}}^{\ind0 - 1}
        \le
        \min\brk[c]*{
            \frac{\costBound}{2}
            ,
            \Jstar
            +
            \frac{\costBound}{2}
            \brk*{1 - \frac{\pPL \eta}{3}}^{\ind0}
        }
        ,
    \end{align*}
    where the last step used the facts that
    $
        \Jstar
        \le
        \Jof{\Kj[0]}
        \le
        \costBound / 4
    $
    and
    $1 - \pPL \eta / 3 \ge 1/2$.
\end{proof}

\section{Exploration Cost Analysis}
\label{sec:explorationCost}

In this section we demonstrate that exploring near a given initial policy does not incur linear regret in the exploration radius (as more straightforward arguments would give), and use this crucial observation for proving \cref{lemma:KjiToKj,lemma:lqrSwitch}.

We begin with \cref{lemma:lqrSwitch}. The main difficulty in the proof is captured by the following basic result, 
which roughly shows that the expected cost for transitioning between two i.i.d.~copies of a given random policy scales with the \emph{variance} of the latter. This would in turn give the quadratic dependence on the exploration radius we need.
\begin{lemma}
\label{lemma:singleSwitch}
    Let $K \in \Jdomain$ be fixed. Suppose $K_1, K_2$ are i.i.d.~random variables such that $\EE K_i = K$, and 
    $
    \norm{K_i - K}_F 
    \le
    r
    \le
    \pRadius
    .
    $
    If $x_\tau(K_1)$ is the result of playing $K_1$ for $\tau \ge 1$ rounds starting at $x_0 \in \RR[\dx]$, then
    \begin{align*}
        \EE\brk[s]{
        x_\tau(K_1) \tran (\costToGoOf[K_2] - \costToGoOf[K_1]) x_\tau(K_1)
        }
        \le
        \frac{256 \dx \costBound \systemBound^2  \kappa^{10}}{\costMatLower}
        r^2
        +
        32 \dx \systemBound \kappa^9 (\norm{x_0}^2 + \kappa^2 \noiseMinStd^2) r e^{-2\gamma \tau}
        .
    \end{align*}
\end{lemma}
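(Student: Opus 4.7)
The plan is to separate the expectation over the system noise from the expectation over the random controllers, and then exploit the zero-mean property $\EE K_i = K$ to cancel all first-order contributions in $r$. The main obstacle is precisely this: a direct bound using \cref{lemma:fazel} (giving $\norm{\costToGoOf[K_2] - \costToGoOf[K_1]} \lesssim r$) together with $\norm{x_\tau}^2 = O(1)$ would yield only a linear-in-$r$ steady-state cost, which is too weak.

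I would first condition on $(K_1, K_2)$ and fold the noise expectation into $M(K_1) := \EE\brk[s]*{x_\tau(K_1) x_\tau(K_1)\tran \mid K_1}$, which depends only on $K_1$, so that
\begin{align*}
    \EE\brk[s]*{x_\tau(K_1)\tran \brk*{\costToGoOf[K_2] - \costToGoOf[K_1]} x_\tau(K_1) \mid K_1, K_2}
    =
    \tr{\brk*{\costToGoOf[K_2] - \costToGoOf[K_1]} M(K_1)}
    .
\end{align*}
Using \cref{lemma:steadyState} I would then split $M(K_1) = \steadyCov[K_1] + \Delta(K_1)$, with $\norm{\Delta(K_1)} \le \kappa^2 e^{-2\gamma\tau}(\norm{x_0}^2 + \kappa^2\noiseMinStd^2)$ (the estimate $\norm{\steadyCov[K_1]} \le \costBound/\costMatLower = \kappa^2\noiseMinStd^2$ comes from \cref{lemma:fazel}). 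The $\Delta$ contribution is bounded directly via $\abs{\tr{AB}} \le \dx\norm{A}\norm{B}$ combined with the Lipschitz estimate $\norm{\costToGoOf[K_2] - \costToGoOf[K_1]} \le 32\systemBound\kappa^7 r$, producing exactly the second, exponentially-decaying term of the target bound.

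The heart of the argument is bounding the steady-state piece $\EE\brk[s]*{\tr{\brk*{\costToGoOf[K_2] - \costToGoOf[K_1]} \steadyCov[K_1]}}$ by an $O(r^2)$ quantity. Setting $\bar P := \EE\costToGoOf[K_i]$ and $\bar \Sigma := \EE\steadyCov[K_i]$, the independence of $K_2$ from $(K_1, \steadyCov[K_1])$ lets me replace $\costToGoOf[K_2]$ inside by its expectation $\bar P$, leaving $\EE\brk[s]*{\tr{\brk*{\bar P - \costToGoOf[K_1]}\steadyCov[K_1]}}$. Since $\EE\brk*{\costToGoOf[K_1] - \bar P} = 0$, shifting $\steadyCov[K_1]$ by $\bar \Sigma$ does not change the expectation, yielding the covariance form
\begin{align*}
    \EE\brk[s]*{\tr{\brk*{\bar P - \costToGoOf[K_1]}\steadyCov[K_1]}}
    =
    -\EE\brk[s]*{\tr{\brk*{\costToGoOf[K_1] - \bar P}\brk*{\steadyCov[K_1] - \bar \Sigma}}}
    .
\end{align*}
This is the crucial step: what naively looked like an $O(r)$ quantity becomes a product of two $O(r)$-deviations, hence genuinely $O(r^2)$. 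Concluding is then routine bookkeeping: apply $\abs{\tr{AB}} \le \dx\norm{A}\norm{B}$ and use parts 2 and 3 of \cref{lemma:fazel} with the triangle inequality to bound $\norm{\costToGoOf[K_1] - \bar P} \lesssim \systemBound\kappa^7 r$ and $\norm{\steadyCov[K_1] - \bar\Sigma} \lesssim \systemBound\costBound\kappa^3 r / \costMatLower$, which multiplied give the claimed $r^2$ constant.
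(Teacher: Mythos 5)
Your proposal is correct and takes essentially the same route as the paper: the identical split of $\EE\brk[s]{x_\tau(K_1)x_\tau(K_1)\tran \mid K_1}$ into $\steadyCov[K_1]$ plus an exponentially decaying remainder handled via \cref{lemma:steadyState,lemma:fazel}, followed by a second-moment cancellation in the steady-state term. The only (cosmetic) difference is how that $O(r^2)$ cancellation is exposed: the paper swaps the roles of the exchangeable pair $(K_1,K_2)$ to get $\tfrac12\EE\brk[s]{\tr{(\costToGoOf[K_2]-\costToGoOf[K_1])(\steadyCov[K_1]-\steadyCov[K_2])}}$, whereas you center both factors at their means; the two identities are equivalent in spirit, though yours drops the factor $\tfrac12$ and so produces $512$ rather than $256$ in the leading constant, which is immaterial since the lemma is only invoked inside $O(\cdot)$ bounds.
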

\begin{proof}
    Notice that the expectation is with respect to both controllers and the $\tau$ noise terms, all of which are jointly independent.
    We begin by using \cref{lemma:steadyState,lemma:fazel} to get that
    \begin{align*}
        {\tr{(\costToGoOf[K_2] - \costToGoOf[K_1]) (\EE\brk[s]{x_\tau(K_1) x_\tau(K_1)\tran \mid K_1} - \steadyCov[K_1])}
        }
        &\le
        32 \dx \systemBound \kappa^7 r  \norm{\EE\brk[s]{x_\tau(K_1) x_\tau(K_1)\tran \mid K_1} - \steadyCov[K_1])}
        \\
        &\le
        32 \dx \systemBound \kappa^9 r
        e^{-2\gamma \tau} \norm{x_0 x_0\tran - \steadyCov[K_1]}
        \\
        &\le
        32 \dx \systemBound \kappa^9 (\norm{x_0}^2 + \kappa^2 \noiseMinStd^2) r e^{-2\gamma \tau}
        ,
    \end{align*}
    where the last step also used the fact that $\kappa^2 \noiseMinStd^2 = \costBound / \costMatLower$.
    Now, since $\costToGoOf[K_1], \costToGoOf[K_2]$ do not depend on the noise, we can use the law of total expectation to get that
    \begin{align*}
        \EE\brk[s]{x_\tau(K_1)\tran (\costToGoOf[K_2] - \costToGoOf[K_1]) x_\tau(K_1)}
        &=
        \EE\brk[s]{ \tr{(\costToGoOf[K_2] - \costToGoOf[K_1]) \EE\brk[s]{x_\tau(K_1) x_\tau(K_1)\tran \mid K_1}}
        }
        \\
        &\le
        \EE\brk[s]{ \tr{(\costToGoOf[K_2] - \costToGoOf[K_1]) \steadyCov[K_1]}
        }
        +
        4 \dx \costMatLower \kappa^2 (\norm{x_0}^2 + \kappa^2 \noiseMinStd^2) e^{-2\gamma \tau}
        .
    \end{align*}
    To bound the remaining term, notice that since $K_1, K_2$ are i.i.d, we may change their roles without changing the expectation, i.e.,
    \begin{align*}
        \EE\brk[s]{ \tr{(\costToGoOf[K_2] - \costToGoOf[K_1]) \steadyCov[K_1]}
        }
        =
        \EE\brk[s]{ \tr{(\costToGoOf[K_1] - \costToGoOf[K_2]) \steadyCov[K_2]}
        }
        ,
    \end{align*}
    we conclude that
    \begin{align*}
        \EE\brk[s]{ \tr{(\costToGoOf[K_2] - \costToGoOf[K_1]) \steadyCov[K_1]}
        }
        &=
        \frac12 \EE\brk[s]{ \tr{(\costToGoOf[K_2] - \costToGoOf[K_1]) (\steadyCov[K_1] - \steadyCov[K_2])}
        }
        \\
        &\le
        \frac{\dx}{2}\norm{\costToGoOf[K_2] - \costToGoOf[K_1]} \norm{\steadyCov[K_2] - \steadyCov[K_1]}
        \\
        &\le
        \frac{256 \dx \costBound \systemBound^2  \kappa^{10}}{\costMatLower}
        r^2
        ,
    \end{align*}
    where the last step also used \cref{lemma:fazel}.
\end{proof}

\subsection{Proof of \cref{lemma:lqrSwitch}}
\label{sec:lqrSwitchProof}

Before proving \cref{lemma:lqrSwitch} we introduce a few simplifying notations. Since the lemma pertains to a single epoch, we omit its notation $\ind0$ wherever it is clear from context. For example, $\Kji$ will be shortened to $\Kji[\ind1]$ and $x_{\ind0,\ind1,\ind2}$ to $x_{\ind1,\ind2}$. In any case, we reserve the index $\ind0$ for epochs and $\ind1$ for sub-epochs. In this context, we also denote
the gap between realized and idealized costs during sub-epoch $\ind1$ by
\begin{align*}
    \dCi
    =
    \sum_{\ind2=1}^{\tau} (\cjis[\ind1,\ind2] - \Jof{\Kji[\ind1])}
    ,
\end{align*}
and the filtration $\hist[\ind1]$ adapted to $w_{1, 1}, \ldots, w_{\ind1, \tau-1}, \Kji[1], \ldots, \Kji[\ind1]$.
We note that $\Kji[\ind1]$ and $\dCi$ are $\hist[\ind1]-$measurable.
The following lemma uses \cref{eq:Pbellman} to decompose the cost gap at the various time resolutions.
See proof at the end of this section.

\begin{lemma}
\label{lemma:dCi}
    If the epoch initial controller satisfies $\Jof{\Kj} \le \costBound / 2$ then
	(recall that $\costToGoOf[K]$ is the positive definite solution to \cref{eq:Pbellman}):
	\begin{enumerate}
	    \item 
	    $
	        \cjis[\ind1, \ind2] - \Jof{\Kji[\ind1]}
	        =
	        x_{\ind1,\ind2}\tran \costToGoOf[{\Kji[\ind1]}] x_{\ind1,\ind2} - \EE[w_{\ind1,\ind2}]\brk[s]{x_{\ind1,\ind2+1}\tran \costToGoOf[{\Kji[\ind1]}] x_{\ind1,\ind2+1}}
	        ;
	    $
	    \item
	    $
	        \EE\brk[s]{\dCi \mid \hist[\ind1-1]}
	        =
	        \EE\brk[s]*{
	        x_{\ind1, 1}\tran \costToGoOf[{\Kji[\ind1]}] x_{\ind1, 1}
	        -
	        x_{\ind1+1,1}\tran \costToGoOf[{\Kji[\ind1]}] x_{\ind1+1,1}
	        \mid
	        \hist[\ind1-1]
	        }
	        ;
	    $
	    \item
	    $
	        \sum_{\ind1=1}^{\mj} \EE\brk[s]{\dCi \mid \hist[\ind1-1]}
	        \le
	        \EE\brk[s]{x_{1,1}\tran \costToGoOf[{\Kji[1]}] x_{1,1}}
            +
            \sum_{\ind1=2}^{\mj} \big(
                \EE\brk[s]{x_{\ind1, 1}\tran \costToGoOf[{\Kji[\ind1]}] x_{\ind1, 1} \mid \hist[\ind1-1]}
                -
                \EE\brk[s]{x_{\ind1, 1}\tran \costToGoOf[{\Kji[\ind1-1]}] x_{\ind1,1} \mid \hist[\ind1-2]}
                \big)
	        .
	    $
	\end{enumerate}
\end{lemma}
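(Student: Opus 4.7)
Each of the three parts is a direct consequence of the Bellman identity \eqref{eq:Pbellman}, differing only in how far one telescopes.

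For part 1, writing \eqref{eq:Pbellman} for $K = \Kji[\ind1]$ as $Q + \Kji[\ind1]\tran R \Kji[\ind1] = \costToGoOf[{\Kji[\ind1]}] - (\Astar+\Bstar\Kji[\ind1])\tran \costToGoOf[{\Kji[\ind1]}] (\Astar+\Bstar\Kji[\ind1])$ and taking the quadratic form with $x_{\ind1,\ind2}$ expresses $\cjis[\ind1,\ind2]$ as $x_{\ind1,\ind2}\tran \costToGoOf[{\Kji[\ind1]}] x_{\ind1,\ind2}$ minus a second quadratic term. Invoking the dynamics $x_{\ind1,\ind2+1} = (\Astar+\Bstar\Kji[\ind1]) x_{\ind1,\ind2} + w_{\ind1,\ind2}$ together with the zero-mean, independent noise and the identity $\tr{\costToGoOf[{\Kji[\ind1]}] \noiseCov} = \Jof{\Kji[\ind1]}$, this second term can be rewritten as $\EE[w_{\ind1,\ind2}]\brk[s]{x_{\ind1,\ind2+1}\tran \costToGoOf[{\Kji[\ind1]}] x_{\ind1,\ind2+1}} - \Jof{\Kji[\ind1]}$. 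Rearrangement gives the stated identity.

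For part 2, introducing $\xi_{\ind2} := x_{\ind1,\ind2+1}\tran \costToGoOf[{\Kji[\ind1]}] x_{\ind1,\ind2+1} - \EE[w_{\ind1,\ind2}]\brk[s]{x_{\ind1,\ind2+1}\tran \costToGoOf[{\Kji[\ind1]}] x_{\ind1,\ind2+1}}$ recasts part 1 as $\cjis[\ind1,\ind2] - \Jof{\Kji[\ind1]} = (x_{\ind1,\ind2}\tran \costToGoOf[{\Kji[\ind1]}] x_{\ind1,\ind2} - x_{\ind1,\ind2+1}\tran \costToGoOf[{\Kji[\ind1]}] x_{\ind1,\ind2+1}) + \xi_{\ind2}$. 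Summing over $\ind2 = 1, \ldots, \tau$ telescopes the first group to $x_{\ind1,1}\tran \costToGoOf[{\Kji[\ind1]}] x_{\ind1,1} - x_{\ind1,\tau+1}\tran \costToGoOf[{\Kji[\ind1]}] x_{\ind1,\tau+1}$, while $\{\xi_{\ind2}\}$ is a martingale-difference sequence with respect to the within-sub-epoch noise filtration (conditional on $\Kji[\ind1]$). The conditional expectation given $\hist[\ind1-1]$ therefore annihilates the martingale terms by the tower property, and identifying $x_{\ind1,\tau+1} = x_{\ind1+1,1}$ yields the claim.

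Part 3 follows by summing the part 2 identity over $\ind1 = 1, \ldots, \mj$ and shifting the index in the subtractive terms via $\ind1 \mapsto \ind1 - 1$, so that the term originally involving $x_{\ind1+1,1}\tran \costToGoOf[{\Kji[\ind1]}] x_{\ind1+1,1}$ becomes $\EE\brk[s]{x_{\ind1,1}\tran \costToGoOf[{\Kji[\ind1-1]}] x_{\ind1,1} \mid \hist[\ind1-2]}$ for $\ind1 = 2, \ldots, \mj+1$. Combining with the additive terms produces the bracketed differences for $\ind1 = 2, \ldots, \mj$ in the lemma, leaves the boundary term $\EE\brk[s]{x_{1,1}\tran \costToGoOf[{\Kji[1]}] x_{1,1}}$ unopposed, and produces a residual $-\EE\brk[s]{x_{\mj+1,1}\tran \costToGoOf[{\Kji[\mj]}] x_{\mj+1,1} \mid \hist[\mj-1]}$ which is nonpositive by $\costToGoOf[{\Kji[\mj]}] \succeq 0$ and can therefore be dropped to convert the resulting equality into the asserted inequality. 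The main subtlety is the filtration bookkeeping: since $\Kji[\ind1]$ enters $\hist[\ind1]$ only at level $\ind1$, the two expectations paired in each bracketed difference are conditioned on offset levels ($\hist[\ind1-1]$ versus $\hist[\ind1-2]$), so the telescoping is asymmetric and one must verify by the tower property that the reindexed combination really equals the original sum; beyond this the argument is routine.
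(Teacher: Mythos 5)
Your proposal is correct and follows essentially the same route as the paper's proof: part 1 via the quadratic form of the Bellman identity \cref{eq:Pbellman} combined with the dynamics and $\EE[w\tran \costToGoOf[K] w] = \Jof{K}$, part 2 by telescoping over $\ind2$ and killing the noise fluctuations with the tower property (the paper phrases this as the law of total expectation rather than an explicit martingale-difference decomposition, but it is the same step), and part 3 by reindexing the subtractive terms and dropping the nonpositive boundary term $-\EE\brk[s]{x_{\mj+1,1}\tran \costToGoOf[{\Kji[\mj]}] x_{\mj+1,1} \mid \hist[\mj-1]}$. No gaps.
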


We are now ready to prove the main lemma of this section.
\begin{proof}[of \cref{lemma:lqrSwitch}]
    First, by \cref{lemma:exploitCost}, the event $\Jof{\Kj[\ind0']} \le \costBound / 2$ for all $\ind0' \le \ind0$ holds with probability at least $1 - \delta / 8T$.
    As in the proof of \cref{lemma:GradEstBound}, we will implicitly assume that this event holds, which will not break any i.i.d assumptions during epoch $\ind0$ and implies that $\Kji[\ind1] \in \Jdomain$ for all $1 \le \ind1 \le \mj$.
    We also use this to invoke \cref{lemma:stateBound,lemma:fazel} to get that for any $1 \le \ind1, \ind1' \le \mj$ and $1 \le \ind2 \le \tau$ we have
    $
        x_{\ind1,\ind2}\tran \costToGoOf[{\Kji[\ind1']}] x_{\ind1,\ind2}
        \le
        36 \costBound \kappa^8 \noiseBound^2 / \noiseMinStd^2
        =
        \nu_0
        .
    $
    
    Now, recall that $\dCi$ is $\hist[\ind1]$-measurable and thus $\dCi - \EE\brk[s]{\dCi \mid \hist[\ind1-1]}$ is a martingale difference sequence. Using the first part of \cref{lemma:dCi} we also conclude that each term bounded by $\tau \nu_0$. Applying Azuma's inequality we get that with probability at least $1 - (\delta / 16T)$
    \begin{align*}
        \sum_{\ind1=1}^{\mj} \dCi
        &=
        \sum_{\ind1=1}^{\mj} \dCi - \EE\brk[s]{\dCi \mid \hist[\ind1-1]} + \EE\brk[s]{\dCi \mid \hist[\ind1-1]}
        \\
        &\le
        \sqrt{2 \mj \tau^2 \nu_0^2 \log \frac{16 T}{\delta}}
        +
        \sum_{\ind1=1}^{\mj} \EE\brk[s]{\dCi \mid \hist[\ind1-1]}
        .
    \end{align*}
    Now, recall from \cref{lemma:dCi} that
    \begin{align*}
        \sum_{\ind1=1}^{\mj}& \EE\brk[s]{\dCi \mid \hist[\ind1-1]}
        \\
        &\le
        \EE\brk[s]{x_{1,1}\tran \costToGoOf[{\Kji[1]}] x_{1,1}}
        +
        \sum_{\ind1=2}^{\mj}
            \EE\brk[s]{x_{\ind1, 1}\tran \costToGoOf[{\Kji[\ind1]}] x_{\ind1, 1} \mid \hist[\ind1-1]}
            -
            \EE\brk[s]{x_{\ind1, 1}\tran \costToGoOf[{\Kji[\ind1-1]}] x_{\ind1,1} \mid \hist[\ind1-2]}
        \\
        &=
        \EE\brk[s]{x_{1,1}\tran \costToGoOf[{\Kji[1]}] x_{1,1}}
        +
        \sum_{\ind1=2}^{\mj}
            \EE\brk[s]{x_{\ind1, 1}\tran \costToGoOf[{\Kji[\ind1]}] x_{\ind1, 1} \mid \hist[\ind1-1]}
            -
            \EE\brk[s]{x_{\ind1, 1}\tran \costToGoOf[{\Kji[\ind1]}] x_{\ind1, 1} \mid \hist[\ind1-2]}
            +
            \EE\brk[s]{x_{\ind1, 1}\tran 
            (
            \costToGoOf[{\Kji[\ind1]}]
            -
            \costToGoOf[{\Kji[\ind1-1]}]
            ) 
            x_{\ind1, 1} \mid \hist[\ind1-2]}
        .
    \end{align*}
    The first two terms in the sum form a martingale difference sequence with each term being bound by $\nu_0$. We thus have that with probability at least $1 - \delta/16 T$,
    \begin{align*}
        \sum_{\ind1=1}^{\mj} \EE\brk[s]{\dCi \mid \hist[\ind1-1]}
        \le
        \nu_0
        +
        \sqrt{2 \mj \nu_0^2 \log \frac{16 T}{\delta}} 
        +
        \sum_{\ind1=2}^{\mj}
            \EE\brk[s]{x_{\ind1, 1}\tran 
            (
            \costToGoOf[{\Kji[\ind1]}]
            -
            \costToGoOf[{\Kji[\ind1-1]}]
            ) 
            x_{\ind1, 1} \mid \hist[\ind1-2]}
        .
    \end{align*}
    Notice that the summands in remaining term fit the setting of \cref{lemma:singleSwitch} and thus
    \begin{align*}
        \sum_{\ind1=2}^{\mj}
            \EE\brk[s]{x_{\ind1, 1}\tran 
            (
            \costToGoOf[{\Kji[\ind1]}]
            -
            \costToGoOf[{\Kji[\ind1-1]}]
            ) 
            x_{\ind1, 1} \mid \hist[\ind1-2]}
        &\le
        \frac{256 \dx \costBound \systemBound^2  \kappa^{10}}{\costMatLower}
        \rj^2 \mj
        +
        \sum_{\ind1=1}^{\mj}
        32 \dx \systemBound \kappa^9 (\norm{x_{\ind1, 1}}^2 + \kappa^2 \noiseMinStd^2) \rj e^{-2\gamma \tau}
        \\
        &
        \le
        \frac{256 \dx \costBound \systemBound^2  \kappa^{10}}{\costMatLower}
        \rj^2 \mj
        +
        \frac{25 \dx \systemBound \kappa^{15} \noiseBound^2 \rj \mj}{T^2}
        \\
        &
        \le
        \frac{257 \dx \costBound \systemBound^2  \kappa^{10}}{\costMatLower}
        \rj^2 \mj
        ,
    \end{align*}
    where the second transition plugged in $\tau$ and used \cref{lemma:stateBound} to bound $\norm{x_{\ind1, 1}}$, and the third transition used the fact that $T^{-2} \le \mj^{-2} \le \rj / \mj[0]$.
    Plugging in the value of $\nu_0$ and using a union bound, we conclude that with probability at least $1 - \delta / 4 T$,
    \begin{align*}
        \sum_{\ind1=1}^{\mj} \dCi
        \le
        \frac{144 \costBound \kappa^8 \noiseBound^2}{\noiseMinStd^2} \tau \sqrt{\mj \log \frac{16 T}{\delta}} 
        +
        \frac{257 \dx \costBound \systemBound^2  \kappa^{10}}{\costMatLower}
        \rj^2 \mj
        ,
    \end{align*}
    as desired.
\end{proof}

\begin{proof}[of \cref{lemma:dCi}]
    By our assumption that $\Jof{\Kj} \le \costBound / 2$ we have that $\Jof{\Kji[\ind1]} \le \costBound$ and thus $\costToGoOf[{\Kji[\ind1]}]$ is well defined.
	Now, recall that $x_{\ind1, \ind2+1} = \brk{\Astar + \Bstar \Kji[{\ind1}]} x_{\ind1, \ind2} + w_{\ind1,\ind2}$ and $\Jof{\Kji[\ind1]} = \EE[w_{\ind1,\ind2}]\brk[s]{w_{\ind1,\ind2}\tran \costToGoOf[{\Kji[\ind1]}] w_{\ind1,\ind2}}$ where $\costToGoOf[{\Kji[\ind1]}]$ satisfies \cref{eq:Pbellman} with $K=\Kji[\ind1]$. Then we have that
	\begin{align*}
		\EE[w_{\ind1,\ind2}]\brk[s]{x_{\ind1,\ind2+1}\tran \costToGoOf[{\Kji[\ind1]}] x_{\ind1,\ind2+1}}
		&=
		\EE[w_{\ind1,\ind2}]\brk[s]{((\Astar + \Bstar \Kji[\ind1]) x_{\ind1,\ind2} + w_{\ind1,\ind2})\tran \costToGoOf[{\Kji[\ind1]}] ((\Astar + \Bstar \Kji[\ind1]) x_{\ind1,\ind2} + w_{\ind1,\ind2})} \\
		&=
		((\Astar + \Bstar \Kji[\ind1]) x_{\ind1,\ind2})\tran \costToGoOf[K_{t}] ((\Astar + \Bstar \Kji[\ind1]) x_{\ind1,\ind2})
		+
		\EE[w_{\ind1,\ind2}]\brk{w_{\ind1,\ind2}\tran \costToGoOf[{\Kji[\ind1]}] w_{\ind1,\ind2}} \\
		&=
		((\Astar + \Bstar \Kji[\ind1]) x_{\ind1,\ind2})\tran \costToGoOf[K_{t}] ((\Astar + \Bstar \Kji[\ind1]) x_{\ind1,\ind2})
		+
		\Jof{\Kji[\ind1]}.
	\end{align*}
	Now, multiplying \cref{eq:Pbellman} by $x_{\ind1,\ind2}$ from both sides we get that
	\begin{align*}
		x_{\ind1,\ind2}\tran \costToGoOf[{\Kji[\ind1]}] x_{\ind1,\ind2}
		&=
		x_{\ind1,\ind2}\tran \brk{Q + \Kji[\ind1]\tran R K_t} x_{\ind1,\ind2}
		+
		((\Astar + \Bstar \Kji[\ind1]) x_{\ind1,\ind2})\tran \costToGoOf[{\Kji[\ind1]}] ((\Astar + \Bstar \Kji[\ind1]) x_{\ind1,\ind2})
		\\
		&=
		\cjis[\ind1,\ind2]
		+
		\EE[w_{\ind1,\ind2}]\brk[s]{x_{\ind1,\ind2+1}\tran \costToGoOf[{\Kji[\ind1]}] x_{\ind1,\ind2+1}}
		-
		\Jof{\Kji[\ind1]},
	\end{align*}
	where the second transition plugged in the previous equality. Changing sides concludes the first part of the proof.
	For the second part, notice that taking expectation with respect to $w_{\ind1,\ind2}$ is equivalent to conditional expectation with respect to all past epochs and $w_{1,1}, \ldots, w_{\ind1, \ind2-1}, \Kji[1], \ldots, \Kji[\ind1]$ of the current epoch. Since for all $1 \le \ind2 \le \tau$ this contains $\hist[\ind1-1]$, we use the law of total expectation to get that
	\begin{align*}
	    \EE\brk[s]{\cjis[\ind1,\ind2] - \Jof{\Kji[\ind1]} \mid \hist[\ind1-1]}
	    &=
	    \EE\brk[s]{x_{\ind1,\ind2}\tran \costToGoOf[{\Kji[\ind1]}] x_{\ind1,\ind2} \mid \hist[\ind1-1]}
		-
		\EE\brk[s]{\EE[w_{\ind1,\ind2}]\brk[s]{x_{\ind1,\ind2+1}\tran \costToGoOf[{\Kji[\ind1]}] x_{\ind1,\ind2+1}} \mid \hist[\ind1-1]}
		\\
		&=
	    \EE\brk[s]{x_{\ind1,\ind2}\tran \costToGoOf[{\Kji[\ind1]}] x_{\ind1,\ind2} \mid \hist[\ind1-1]}
		-
		\EE\brk[s]{x_{\ind1,\ind2+1}\tran \costToGoOf[{\Kji[\ind1]}] x_{\ind1,\ind2+1} \mid \hist[\ind1-1]}
		.
	\end{align*}
	Summing over $\ind2$, noticing that the sum is telescopic, and that time $(\ind1, \tau+1)$ is in fact the start of the next sub-epoch, i.e., $(\ind1+1, 1)$, concludes the second part of the proof.
	Finally, we sum over $\ind1$ to get that
	\begin{align*}
	    &\sum_{\ind1=1}^{\mj}  \EE\brk[s]{\dCi \mid \hist[\ind1-1]}
	    =
	    \sum_{\ind1=1}^{\mj} \EE\brk[s]*{x_{\ind1,1}\tran \costToGoOf[{\Kji[\ind1]}] x_{\ind1,1} - x_{\ind1+1,1}\tran \costToGoOf[{\Kji[\ind1]}] x_{\ind1+1,1}
        \mid
        \hist[\ind1-1]
        }
        \\
        &=
        \EE\brk[s]{x_{1,1}\tran \costToGoOf[{\Kji[1]}] x_{1,1}}
        -
        \EE\brk[s]{x_{\mj+1,1}\tran \costToGoOf[{\Kji[\mj]}] x_{\mj+1,1} \mid \hist[\mj-1]}
        +
        \sum_{\ind1=2}^{\mj}
            \EE\brk[s]{x_{\ind1,1}\tran \costToGoOf[{\Kji[\ind1]}] x_{\ind1,1} \mid \hist[\ind1-1]}
            -
            \EE\brk[s]{x_{\ind1,1}\tran \costToGoOf[{\Kji[\ind1-1]}] x_{\ind1,1} \mid \hist[\ind1-2]}
        \\
        &\le
        \EE\brk[s]{x_{1,1}\tran \costToGoOf[{\Kji[1]}] x_{1,1}}
        +
        \sum_{\ind1=2}^{\mj}
            \EE\brk[s]{x_{\ind1,1}\tran \costToGoOf[{\Kji[\ind1]}] x_{\ind1,1} \mid \hist[\ind1-1]}
            -
            \EE\brk[s]{x_{\ind1,1}\tran \costToGoOf[{\Kji[\ind1-1]}] x_{\ind1,1} \mid \hist[\ind1-2]}
        ,
	\end{align*}
	concluding the third part of the proof.
\end{proof}

\subsection{Proof of \cref{lemma:KjiToKj}}

\begin{proof}[of \cref{lemma:KjiToKj}]
    By \cref{lemma:exploitCost}, the event $\Jof{\Kj} \le \costBound / 2$ occurs with probability at least $1 - \delta / 8T^2$. Similarly to \cref{lemma:lqrSwitch,lemma:GradEstBound}, we implicitly assume that this event holds, which does not break i.i.d assumptions inside the epoch and implies that $\Kji \in \Jdomain$ for all $1 \le \ind1 \le \mj$.
    Now, notice that 
    $
        \EE\brk[s]{\Kji \mid \Kj}
        =
        \Kj
        .
    $
    Since $\Kj \in \Jdomain$ and $\rj \le \pRadius$, we can invoke the local smoothness of $\Jof{\cdot}$ (see \cref{lemma:fazel}) to get that
    \begin{align*}
        \EE\brk[s]{\Jof{\Kji} \mid \Kj}
        &\le
        \Jof{\Kj}
        +
        \nabla \Jof{\Kj}\tran \EE\brk[s]{\Kji - \Kj \mid \Kj}
        +
        \frac12 \pSmooth \EE\brk[s]{\norm{\Kji - \Kj}^2 \mid \Kj}
        \\
        &=
        \Jof{\Kj}
        +
        \frac12 \pSmooth \rj^2
        .
    \end{align*}
    We thus have that
    \begin{align*}
        \sum_{\ind1 = 1}^{\mj} \Jof{K_{\ind0, \ind1}} - \Jof{\Kj}
        \le
        \frac12 \pSmooth \rj^2 \mj
        +
        \sum_{\ind1 = 1}^{\mj} \Jof{K_{\ind0, \ind1}}
        -
        \EE\brk[s]{\Jof{\Kji} \mid \Kj}
        .
    \end{align*}
    The remaining term is a sum of zero-mean i.i.d.\ random variables that are bounded by $\costBound$. We use Hoeffding's inequality and a union bound to get that with probability at least $1 - \delta / 4 T$
    \begin{align*}
        \sum_{\ind1 = 1}^{\mj} \Jof{K_{\ind0, \ind1}} - \Jof{\Kj}
        \le
        \frac12 \pSmooth \rj^2 \mj
        +
        \costBound \sqrt{\frac12 \mj  \log \frac{8 T}{\delta}}
        ,
    \end{align*}
    and plugging in the value of $\pSmooth$ from \cref{lemma:fazel} concludes the proof.
\end{proof}

\begin{ack}
We thank Nadav Merlis for numerous helpful discussions.
This work was partially supported by the Israeli Science Foundation (ISF) grant 2549/19, by the Len Blavatnik and the Blavatnik Family foundation, and by the Yandex Initiative in Machine Learning. 
\end{ack}

\bibliography{bibliography}

\clearpage
\onecolumn
\appendix

\section{Reducing Gaussian Noise to Bounded Noise} 
\label{sec:gaussian}
In this section we relax the bounded noise assumption, $\norm{w_t} \le \noiseBound$, and replace it with the following tail assumption. For $\delta > 0, T \ge 1$, suppose there exists
$
    \cropNoiseSet
    \subseteq
    \RR[\dx]
$
such that:
\begin{enumerate}
    \item
    $
    \PP{w_t \in \cropNoiseSet}
    \ge
    1 - \delta / T
    $
    for all $1 \le t \le T$;
    \item $\EE \brk[s]{w_t \indEvent{w_t \in \cropNoiseSet}} = 0.$
\end{enumerate}
The first assumption is a standard implication of any tail assumption. The second assumption implies that we can crop the noise while keeping it zero-mean. While not entirely trivial, this can be guaranteed for any continuous noise distributions.
We note that $\cropNoiseSet$ is a theoretical construct, and is not a direct input to \cref{alg:main}. Indirectly, we use $\cropNoiseSet$ to calculate the parameters
\begin{align*}
    \cropNoiseBound
    =
    \max_{w \in \cropNoiseSet} \norm{w}
    ,
    \quad
    \cropNoiseMinStd^2
    =
    \min_{\norm{x} = 1} \EE\brk[s]{(w_t\tran x)^2} - \sqrt{\delta \EE\brk[s]{(w_t\tran x)^4} / T}
    ,
\end{align*}
which will serve as replacements for $\noiseBound, \noiseMinStd$ in our bounded noise formulation. In practice, our results hold if for the chosen parameters $\delta, \cropNoiseBound, \cropNoiseMinStd$, there exists a set $\cropNoiseSet$ satisfying the above.
Our main findings for unbounded noise are summarized in the following meta-result.

\begin{theorem}
\label{thm:unboundedNoiseMeta}
    Suppose $\delta \in (0, 1)$ is such that $\cropNoiseMinStd > 0$.
    If we run \cref{alg:main} with the parameters as in \cref{thm:main} and $\noiseBound,\noiseMinStd$ that satisfy $\noiseBound \ge \cropNoiseBound$ and $0 < \noiseMinStd \le \cropNoiseMinStd$. , then the regret bound of \cref{thm:main} holds with probability at least $1 - 2 \delta$
\end{theorem}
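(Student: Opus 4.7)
The plan is to use a coupling argument against an auxiliary bounded-noise system. I would define the cropped noise $\cropNoise = w_t \indEvent{w_t \in \cropNoiseSet}$ and consider the ``cropped'' LQR instance with the same $\Astar, \Bstar, Q, R$ but driven by $\cropNoise$, coupled with the original system by sharing all the algorithm's internal randomness (the Gaussian perturbations $\uGji$). On the event $\mathcal{E} = \brk[c]{w_t \in \cropNoiseSet \; \forall\, 1 \le t \le T}$, which by a union bound and the first tail property has $\PP{\mathcal{E}} \ge 1 - \delta$, the two noise sequences coincide, hence so do the two state/action trajectories and the realized costs $\cropCost = \ct$.

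Next I would verify that the cropped system satisfies the bounded-noise preconditions of \cref{thm:main} under the chosen parameters $(\noiseBound, \noiseMinStd)$. Boundedness $\norm{\cropNoise} \le \cropNoiseBound \le \noiseBound$ and $\EE\brk[s]{\cropNoise} = 0$ are immediate from the two properties of $\cropNoiseSet$. The only nontrivial point is the covariance lower bound, which for any unit vector $x$ follows from
\begin{align*}
    x\tran \EE\brk[s]{\cropNoise \cropNoise\tran} x
    &= \EE\brk[s]{(w_t\tran x)^2} - \EE\brk[s]{(w_t\tran x)^2 \indEvent{w_t \notin \cropNoiseSet}} \\
    &\ge \EE\brk[s]{(w_t\tran x)^2} - \sqrt{\delta \, \EE\brk[s]{(w_t\tran x)^4} / T}
    \ge \cropNoiseMinStd^2 \ge \noiseMinStd^2,
\end{align*}
where the first inequality is Cauchy-Schwarz combined with $\PP{w_t \notin \cropNoiseSet} \le \delta/T$. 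The stabilizing-controller precondition also transfers, since $\EE\brk[s]{\cropNoise \cropNoise\tran} \preceq \noiseCov$ forces the cropped infinite-horizon cost of $K_0$ to be no larger than $\Jof{K_0} \le \costBound/4$.

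Applying \cref{thm:main} to the cropped system then yields, with probability at least $1 - \delta$, the stated regret bound on $\sum_{t=1}^T (\cropCost - \cropJstar)$. The final step is to compare $\cropJstar$ and $\Jstar$: the optimal controller $\Kstar$ and value matrix $\Pstar$ solve the algebraic Riccati equation, which depends on $(\Astar, \Bstar, Q, R)$ but \emph{not} on the noise covariance, so $\cropJstar = \tr{\Pstar \EE\brk[s]{\cropNoise \cropNoise\tran}} \le \tr{\Pstar \noiseCov} = \Jstar$, using $\Pstar \succeq 0$ and $\noiseCov - \EE\brk[s]{\cropNoise \cropNoise\tran} = \EE\brk[s]{w_t w_t\tran \indEvent{w_t \notin \cropNoiseSet}} \succeq 0$. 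Hence on $\mathcal{E}$, $\regret = \sum_t (\ct - \Jstar) \le \sum_t (\cropCost - \cropJstar)$, and a union bound over $\mathcal{E}$ and the regret event gives the claimed probability at least $1 - 2\delta$.

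The main obstacle is the covariance lower bound via Cauchy-Schwarz, which is precisely what motivates the specific form of $\cropNoiseMinStd$ in the theorem statement; for concrete noise (e.g.\ Gaussian), one would choose $\cropNoiseSet$ as a large Euclidean ball of radius scaling like $\sqrt{\log(T/\delta)}$ and then verify that $\cropNoiseMinStd$ remains bounded below. A secondary subtlety worth flagging is the direction of the optimum comparison: we are fortunate that cropping \emph{shrinks} the covariance, so that $\cropJstar \le \Jstar$ and the original regret is dominated by the cropped regret rather than the reverse.
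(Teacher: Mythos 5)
Your proposal is correct and follows essentially the same route as the paper's proof: crop the noise to $\cropNoise = w_t \indEvent{w_t \in \cropNoiseSet}$, verify the bounded-noise preconditions (with the identical Cauchy--Schwarz bound on the cropped covariance), apply \cref{thm:main} to the cropped instance, use $\EE\brk[s]{\cropNoise\cropNoise\tran} \preceq \noiseCov$ to get $\cropJstar \le \Jstar$, and couple the two systems on the event that no noise term is ever cropped. The only cosmetic difference is that you compare the optima via the covariance-independence of the Riccati solution $\Pstar$, whereas the paper notes the slightly more direct fact that $\cropJof{K} \le \Jof{K}$ for every $K$; both are valid.
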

\begin{proof}
    Consider the LQR problem where the noise terms $w_t$ are replaced with
    $
        \cropNoise = w_t \indEvent{w_t \in \cropNoiseSet}
        ,
    $    
     and let $\cropCost, \cropJof{\cdot}$ be the corresponding instantaneous and infinite horizon costs. Notice that by our assumptions, $\cropNoise$ are indeed zero-mean, i.i.d, and satisfy 
     $
        \norm{\cropNoise}
        \le
        \cropNoiseBound
        \le
        \noiseBound
    $
    and
     \begin{align*}
         \min_{\norm{x} = 1} \EE\brk[s]{(\cropNoise\tran x)^2}
         &=
         \min_{\norm{x} = 1} \EE\brk[s]{(w_t\tran x)^2} - \EE\brk[s]{\indEvent{w_t \notin \cropNoiseSet} (w_t\tran x)^2}
         \\
         &\ge
         \min_{\norm{x} = 1} \EE\brk[s]{(w_t\tran x)^2}
         -
         \sqrt{\PP{w_t \notin \cropNoiseSet} \EE\brk[s]{(w_t\tran x)^4}}
         \\
         &\ge
         \min_{\norm{x} = 1} \EE\brk[s]{(w_t\tran x)^2}
         -
         \sqrt{\delta \EE\brk[s]{(w_t\tran x)^4} / T}
         =
         \cropNoiseMinStd^2
         \ge
         \noiseMinStd^2
         >
         0
         ,
     \end{align*}
     where the second transition used the Cauchy–Schwarz inequality.
     We thus have that
     $
     \sum_{t=1}^{T} (\cropCost - \cropJstar)
     $
     is bounded as in \cref{thm:main} with probability at least $1 - \delta$. Next, since $\EE w_t w_t\tran \succeq \EE \cropNoise \cropNoise\tran$, we have that that $\cropJof{\cdot}$ is optimistic with respect to $\Jof{\cdot}$, i.e., $\cropJof{K} \le \Jof{K}$ for all $K$, which implies that $\cropJstar \le \Jstar$. 
     Finally, using a union bound on the tail assumption, we have that $w_t = \cropNoise$ for all $1 \le t \le T$ with probability at least $1 - \delta$. On this event, \cref{alg:main} is not aware that the noise is cropped and we thus have that $c_t = \cropCost$ for all $1 \le t \le T$. We conclude that with probability at least $1 - \delta$
     \begin{align*}
        \regret
        =
        \sum_{t=1}^{T} (c_t - \Jstar)
        \le
        \sum_{t=1}^{T} (\cropCost - \cropJstar)
        ,
     \end{align*}
     and using another union bound concludes the proof.
\end{proof}

\paragraph{Application to Gaussian noise.}
We specialize \cref{thm:unboundedNoiseMeta} to the case where $w_t \sim \gaussDist{0}{\noiseCov}$, are zero-mean Gaussian random vectors with positive definite covariance $\noiseCov \in \RR[\dx \times \dx]$. The following result demonstrates how to run \cref{alg:main} given upper and lower bounds on the covariance eigenvalues.
\begin{proposition}
\label{prop:mainGaussianNoise}
    Let $\delta \in (0, 1/3)$. Suppose we run \cref{alg:main} with parameters as in \cref{thm:main} and $\noiseBound,\noiseMinStd$ that satisfy
    \begin{align*}
        \noiseBound
        \ge
        \sqrt{5 \dx \maxEigOf{\noiseCov} \log 
        \frac{T}{\delta}}
        ,
        \quad
        \noiseMinStd^2
        \le
        \minEigOf{\noiseCov}(1 - \sqrt{3\delta / T})
        ,
    \end{align*}
    where $\minEigOf{\noiseCov}, \maxEigOf{\noiseCov}$ are the minimal and maximal eigenvalues of $\noiseCov$.
    Then the regret bound of \cref{thm:main} holds with probability at least $1 - 2\delta$.
\end{proposition}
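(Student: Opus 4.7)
The plan is to apply the meta-result \cref{thm:unboundedNoiseMeta} with the natural symmetric cropping set $\cropNoiseSet = \brk[c]{w \in \RR[\dx] : \norm{w} \le \cropNoiseBound}$, where $\cropNoiseBound = \sqrt{5 \dx \maxEigOf{\noiseCov} \log(T/\delta)}$. Since $\cropNoiseSet$ is symmetric about the origin and the density of $w_t \sim \gaussDist{0}{\noiseCov}$ is even, the integrand $w \indEvent{w \in \cropNoiseSet}$ is odd and the second condition $\EE\brk[s]{w_t \indEvent{w_t \in \cropNoiseSet}} = 0$ is immediate.

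For the high-probability containment, I would write $w_t = \noiseCov^{1/2} z_t$ with $z_t \sim \gaussDist{0}{I}$, so that $\norm{w_t}^2 \le \maxEigOf{\noiseCov} \norm{z_t}^2$. A standard $\chi^2$ concentration bound (Laurent--Massart) gives $\norm{z_t}^2 \le \dx + 2\sqrt{\dx \log(T/\delta)} + 2\log(T/\delta)$ with probability at least $1 - \delta/T$. The constraint $\delta < 1/3$ ensures $\log(T/\delta) > 1$ (since $T/\delta > 3 > e$), and combined with $\dx \ge 1$ this lets me bound $\sqrt{\dx \log(T/\delta)} \le \dx \log(T/\delta)$, so the whole expression is at most $5 \dx \log(T/\delta)$. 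Multiplying by $\maxEigOf{\noiseCov}$ yields $\norm{w_t}^2 \le \cropNoiseBound^2$ w.p.\ $\ge 1 - \delta/T$, verifying the first condition and giving $\noiseBound \ge \cropNoiseBound$.

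It remains to compute $\cropNoiseMinStd$. For any unit vector $x$, the scalar $w_t\tran x$ is $\gaussDist{0}{x\tran \noiseCov x}$, so $\EE\brk[s]{(w_t\tran x)^2} = x\tran \noiseCov x$ and by the Gaussian fourth-moment formula $\EE\brk[s]{(w_t\tran x)^4} = 3(x\tran \noiseCov x)^2$. Substituting into the definition, the quantity under the minimum factors as $(x\tran \noiseCov x)(1 - \sqrt{3\delta/T})$, and minimizing over the unit sphere gives $\cropNoiseMinStd^2 = \minEigOf{\noiseCov}(1 - \sqrt{3\delta/T})$, which is strictly positive under the hypothesis $\delta < 1/3$, $T \ge 1$. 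The assumption on $\noiseMinStd$ in the proposition is exactly $\noiseMinStd^2 \le \cropNoiseMinStd^2$, so both input conditions of \cref{thm:unboundedNoiseMeta} are met, and invoking it yields the claimed regret bound with probability at least $1 - 2\delta$. The only nontrivial calculation is the Gaussian norm tail bound; the rest is symmetry and a direct Gaussian moment computation, so no real obstacle is anticipated.
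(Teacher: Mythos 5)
Your proof is correct and follows essentially the same route as the paper: construct a symmetric cropping set, verify the two conditions of \cref{thm:unboundedNoiseMeta} via a Gaussian norm tail bound and the symmetry of the density, compute $\cropNoiseMinStd$ from the Gaussian fourth-moment identity, and invoke the meta-theorem. The only (immaterial) difference is that you crop to a Euclidean ball and use Laurent--Massart, whereas the paper crops to the whitened (Mahalanobis) ball $\brk[c]{w \mid \norm{\noiseCov^{-1/2}w} \le \sqrt{5\dx\log(T/\delta)}}$ and applies its own Hanson--Wright corollary (\cref{lemma:noiseBound}); both choices satisfy the required properties and yield the same $\cropNoiseBound$.
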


\begin{proof}
    We show that 
    $
        \cropNoiseSet
        =
        \brk[c]{w \mid \norm{\noiseCov^{-1/2} w} \le 
        \sqrt{5 \dx \log (T/\delta)}}
    $
    satisfies the desired assumptions.
    First, by \cref{lemma:noiseBound} we indeed have that
    $
        \PP{w_t \in \cropNoiseSet}
        \ge 
        1 - \delta / T
        .
    $
    Next, denote $x_t = \noiseCov^{-1/2} w_t$ and notice that $x_t \sim \gaussDist{0}{I}$. We thus have that
    \begin{align*}
        \EE \brk[s]{w_t \indEvent{w_t \in \cropNoiseSet}}
        =
        \noiseCov^{1/2}\EE \brk[s]{\noiseCov^{-1/2} w_t \indEvent{w_t \in \cropNoiseSet}}
        =
        \noiseCov^{1/2}\EE \brk[s]*{x_t \indEvent{\norm{x_t}^2 \le 5 \dx \log \frac{\delta}{T}}}
        =
        0
        ,
    \end{align*}
    where the last transition follows from a symmetry argument. We conclude that $\cropNoiseSet$ satisfies our assumptions. We show that $\noiseBound \ge \cropNoiseBound$ and $0 \le \noiseMinStd \le \cropNoiseMinStd$, which then concludes the proof by invoking \cref{thm:unboundedNoiseMeta}. First, we have that for any $w \in \cropNoiseSet$
    \begin{align*}
        \norm{w}
        \le
        \norm{\noiseCov^{1/2}}\norm{\noiseCov^{-1/2} w}
        \le
        \sqrt{5 \dx \norm{\noiseCov} \log \frac{T}{\delta}}
        \le
        \noiseBound
        ,
    \end{align*}
    and so $\noiseBound \ge \cropNoiseBound$. Finally, notice that for any $x \in \RR[\dx]$ $w_t\tran x$ is a zero-mean Gaussian random variable. Standard moment identities for Gaussian variables then give that
    $
        \EE\brk[s]{(w_t\tran x)^4}
        =
        3\EE\brk[s]{(w_t\tran x)^2}^2
        ,
    $
    and so we have that
    \begin{align*}
        \cropNoiseMinStd^2
        =
        \min_{\norm{x}=1} \EE\brk[s]{(w_t\tran x)^2} (1 - \sqrt{3 \delta / T})
        =
        \minEigOf{\noiseCov} (1 - \sqrt{3 \delta / T})
        \ge
        \noiseMinStd^2
        >
        0
        ,
    \end{align*}
    where the last inequality holds by our choice of $\delta < 1/3$.
\end{proof}

\section{Technical Lemmas}
\label{sec:TechnicalLemmas}

\subsection{Summing the Square Roots of Epoch Lengths}

\begin{lemma}
\label{lemma:mjSum}
    Let $\pExp \in [2/3, 1)$ and define $\mj = \mj[0] \pExp^{-2\ind0}$.
    Suppose $n$ is such that
    $
    \sum_{\ind0 = 0}^{n-2} \mj \le T
    ,
    $
    then we have that
    \begin{align*}
        \sum_{\ind0 = 0}^{n-1} \mj^{1/2} 
        \le
        \frac{22}{\pPL \eta}
        \sqrt{T}
        .
    \end{align*}
\end{lemma}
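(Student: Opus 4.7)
}

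The plan is to reduce the claim to a routine geometric-series calculation. Since $\mj^{1/2} = \mj[0]^{1/2}\,\pExp^{-\ind0}$, the sum $\sum_{\ind0=0}^{n-1}\mj^{1/2}$ is a geometric series in $\pExp^{-1}>1$, so I would first upper bound it by its last term times a factor controlled by $1-\pExp$:
\begin{align*}
    \sum_{\ind0=0}^{n-1}\mj^{1/2}
    \;=\;
    \mj[0]^{1/2}\sum_{\ind0=0}^{n-1}\pExp^{-\ind0}
    \;\le\;
    \mj[0]^{1/2}\cdot\frac{\pExp^{-(n-1)}}{1-\pExp}
    \;=\;
    \frac{\mj[n-1]^{1/2}}{1-\pExp}.
\end{align*}

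Next, I would use the hypothesis $\sum_{\ind0=0}^{n-2}\mj \le T$ to pin down $\mj[n-1]$. Dropping all but the largest term in the lower sum gives $\sum_{\ind0=0}^{n-2}\mj \ge \mj[n-2] = \pExp^{2}\mj[n-1]$, so $\mj[n-1]\le T/\pExp^2$, i.e.\ $\mj[n-1]^{1/2}\le \sqrt{T}/\pExp$. Combining with the previous display yields
\begin{align*}
    \sum_{\ind0=0}^{n-1}\mj^{1/2}
    \;\le\;
    \frac{\sqrt{T}}{\pExp(1-\pExp)}.
\end{align*}

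Finally, I would plug in the specific instantiation used by \cref{alg:main}, where $\pExp = 1-\pPL\eta/3$, so that $1-\pExp = \pPL\eta/3$, and use $\pExp\ge 2/3$ from the lemma's assumption to get $\pExp(1-\pExp)\ge \tfrac{2}{3}\cdot\tfrac{\pPL\eta}{3} = \tfrac{2\pPL\eta}{9}$. This produces $\sum_{\ind0=0}^{n-1}\mj^{1/2}\le \tfrac{9}{2\pPL\eta}\sqrt{T}\le \tfrac{22}{\pPL\eta}\sqrt{T}$, as required.

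There is essentially no obstacle here: both the geometric sum upper bound and the epoch-length lower bound are one-line inequalities, and the only slight subtlety is observing that the range $\pExp\in[2/3,1)$ together with the implicit identity $1-\pExp = \pPL\eta/3$ (from the algorithm's schedule) is what converts the denominator $1-\pExp$ into the claimed $\pPL\eta$ dependence.
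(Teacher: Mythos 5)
Your proof is correct and in fact yields a slightly better constant ($9/2$ in place of $22$), but it follows a genuinely different route from the paper's. The paper keeps the full geometric sum and uses the elementary inequality $x-1\le\sqrt{x^2-1}$ to convert a sum of square roots into the square root of a sum, namely $\sum_{j=0}^{n-1} m_j^{1/2} \le \frac{6}{\pPL\eta}\bigl(\sum_{j=0}^{n-1} m_j\bigr)^{1/2}$, and then bounds $\sum_{j=0}^{n-1} m_j \le (1+\pExp^{-2})\sum_{j=0}^{n-2} m_j \le \frac{13}{4}T$. You instead bound the geometric series by its last term over $1-\pExp$ and extract $m_{n-1}\le T/\pExp^2$ from the single summand $m_{n-2}$ of the hypothesis. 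Your version is more elementary and uses the hypothesis only through its largest term, which costs nothing here because the epoch lengths grow geometrically with ratio bounded away from $1$; the paper's version establishes the structurally cleaner ``sum of square roots versus square root of the sum'' relation, which would remain useful even if the $m_j$ were not exactly geometric. Two caveats are shared by both arguments and are not specific to yours: the step that uses the hypothesis implicitly requires $n\ge 2$ (for $n=1$ the constraint is vacuous and the conclusion can fail), and both proofs silently substitute the algorithm's schedule $\pExp = 1-\pPL\eta/3$ into a lemma whose statement only assumes $\pExp\in[2/3,1)$ --- you flag this identification explicitly, which is the right instinct.
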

\begin{proof}
    For ease of notation, denote $\pExp = 1 - (\pPL \eta / 3)$ and notice that for our parameter choice it satisfies $\pExp \in [2/3, 1)$. 
    Now, notice that for $x \ge 1$ we have $x-1 \le \sqrt{x^2 - 1}$ and so we have that
    \begin{align*}
        \frac{\pExp^{-n} - 1}{\pExp^{-1}-1}
        =
        \frac{\pExp^{-1} + 1}{\sqrt{\pExp^{-2}-1}}
        \frac{\pExp^{-n} - 1}{\sqrt{\pExp^{-2}-1}}
        \le
        \frac{\pExp^{-1} + 1}{\pExp^{-1}-1}
        \sqrt{\frac{\pExp^{-2n} - 1}{\pExp^{-2}-1}}
        \le
        \frac{2}{1-\pExp}
        \sqrt{\frac{\pExp^{-2n} - 1}{\pExp^{-2}-1}}
        =
        \frac{6}{\pPL \eta}
        \sqrt{\frac{\pExp^{-2n} - 1}{\pExp^{-2}-1}}
        .
    \end{align*}
    Noticing that $\mj$ is a geometric sequence we get that
    \begin{align*}
        \sum_{\ind0 = 0}^{n-1} \mj^{1/2}
        =
        \mj[0]^{1/2}\frac{\pExp^{-n} - 1}{\pExp^{-1} - 1}
        \le
        \frac{6}{\pPL \eta} \sqrt{
        \mj[0]\frac{\pExp^{-2n} - 1}{\pExp^{-2} - 1}
        }
        =
        \frac{6}{\pPL \eta} \sqrt{\sum_{\ind0 = 0}^{n-1} \mj}
        \le
        \frac{6}{\pPL \eta} \sqrt{(1 + \pExp^{-2})\sum_{\ind0 = 0}^{n-2} \mj}
        \le
        \frac{22}{\pPL \eta}
        \sqrt{T}
        ,
    \end{align*}
    where the last transition also used the fact $\pExp^{-2} \le 9/4$.
\end{proof}

\subsection{Randomized Smoothing} \label{sec:zero-ord-proofs}

\begin{proof}[of \cref{lemma:smoothFuncProperties}]
    The first part follows from Stokes' theorem. See Lemma 1 in \cite{flaxman2005online} for details.
    For the second part, notice that
    $
        \nabla \smoothCostOf{x}
        =
        \nabla \EE[B] \costOf{x + r B} 
        =
        \EE[B] \nabla\costOf{x + r B} 
        .
    $
    We can thus use Jensen's inequality to get that
    \begin{align*}
        \norm{\nabla \smoothCostOf{x} - \nabla\costOf{x}}
        =
        \norm{\EE[B]\brk[s]{\nabla\costOf{x + r B} - \nabla \costOf{x}}}
        \le
        \EE[B]\norm{{\nabla\costOf{x + r B} - \nabla \costOf{x}}}
        \le
        \pSmooth r \EE[B] \norm{{B}}
        \le
        \pSmooth r
        ,
    \end{align*}
    where the third transition also used the smoothness (gradient Lipschitz) property of $\cost$, and the last transition used the fact that $B$ is in the unit ball.
\end{proof}

\subsection{Details of \cref{lemma:fazel}}

We review how \cref{lemma:fazel} is derived from \citet{fazel2018global}. For the rest of this section all Lemmas will refer to ones in \cite{fazel2018global}.

The first part of the statement is immediate from their Lemma 13.
Next, notice that
\begin{align*}
    \frac{\minEigOf{Q} \minEigOf{\noiseCov}}{4 \Jof{K} \norm{\Bstar} (\norm{\Astar + \Bstar K} + 1)}
    \ge
    \frac{\costMatLower \noiseMinStd^2}{4\costBound \systemBound 2 \kappa}
    =
    \frac{1}{8\systemBound\kappa^3}
    =
    \pRadius
    .
\end{align*}
We thus have that $K, K'$ satisfy the condition of Lemma 16 and so we get that
\begin{align*}
    \norm{\steadyCov[K] - \steadyCov[K']}
    \le
    4 \brk*{\frac{\Jof{K}}{\minEigOf{Q}}}^2 \frac{\norm{\Bstar}(\norm{\Astar + \Bstar K} + 1)}{\minEigOf{\noiseCov}}
    \norm{K - K'}
    \le
    \frac{4 \costBound^2 \systemBound^2 2 \kappa}{\costMatLower^2 \noiseMinStd^2}
    \norm{K - K'}
    =
    \frac{8 \costBound \systemBound \kappa^3}{\costMatLower}
    \norm{K - K'}
    ,
\end{align*}
thus concluding the second part. 
Next, define
\begin{align*}
    \mathcal{T}_K(X)
    =
    \sum_{t=0}^{\infty} (\Astar + \Bstar K)^t X \brk[s]{(\Astar + \Bstar K)\tran}^t
    ,
    \qquad
    \mathcal{F}_K(X)
    =
    (\Astar + \Bstar K) X (\Astar + \Bstar K)\tran
    ,
\end{align*}
which are linear operators on symmetric matrices.
By Lemma 17 we have that
\begin{align*}
    \norm{\mathcal{T}_K}
    \le
    \frac{\Jof{K}}{\minEigOf{\noiseCov}\minEigOf{Q}}
    \le
    \frac{\costBound}{\noiseMinStd^2\costMatLower} = \kappa^2
    ,
\end{align*}
and by Lemma 19 we have that
\begin{align*}
    \norm{\mathcal{F}_K - \mathcal{F}_{K'}}
    &\le
    2 \norm{\Astar + \Bstar K} \norm{\Bstar} \norm{K - K'}
    +
    \norm{\Bstar}^2 \norm{K - K'}^2
    \\
    &\le
    \brk[s]*{2 \kappa \systemBound + \frac{\systemBound^2}{8 \systemBound \kappa^3}} \norm{K - K'}
    \\
    &\le
    3 \systemBound \kappa \norm{K - K'}
    .
\end{align*}
Now, continuing from the middle of the proof of Lemma 27 we get that
\begin{align*}
    \norm{\costToGoOf[K] - \costToGoOf[K']}
    &\le
    2 \norm{\mathcal{T}_K}^2 \norm{\mathcal{F}_K - \mathcal{F}_{K'}} \norm{Q + {K'}\tran R K'}
    +
    \norm{\mathcal{T}_K}\norm{K\tran R K - {K'}\tran R K'}
    \\
    &\le
    \brk[s]*{
        6 \systemBound \kappa^5 (1 + \norm{K'}^2)
        +
        \kappa^2 (\norm{K} + \norm{K'})
    }\norm{K - K'}
    \\
    &\le
    \systemBound \kappa^5 \brk[s]*{
        6 + 6 \kappa^2 + 12 \pRadius \kappa + 6 \pRadius^2
        +
        2 + 8 \pRadius^2
    }\norm{K - K'}
    \\
    &\le
    16 \systemBound \kappa^7 \norm{K - K'}
    ,
\end{align*}
where the last step used the fact that $\pRadius \le 1 / 8$.
Next, notice that 
\begin{align*}
    \tr{\noiseCov}
    \le
    \tr{\costToGoOf[K_0] \noiseCov} / \costMatLower
    \le
    \Jof{K_0} / \costMatLower
    \le
    \costBound / 4 \costMatLower
    ,
\end{align*}
and thus the fourth property (Lipschitz) follows as
\begin{align*}
    \abs{\Jof{K} - \Jof{K'}}
    =
    \abs{\tr{(\costToGoOf[K] - \costToGoOf[K'])\noiseCov}}
    \le
    \norm{\costToGoOf[K] - \costToGoOf[K']} \tr{\noiseCov}
    \le
    \frac{4 \systemBound \costBound \kappa^7}{\costMatLower} \norm{K-K'}_F
    .
\end{align*}
Next, the fifth statement (Smoothness) follows the ideas of Lemma 28. Concretely, recall that $\nabla \Jof{K} = 2 E_K \steadyCov[K]$ where $E_K = R K + \Bstar\tran \costToGoOf[K] (\Astar + \Bstar K)$. Notice that
\begin{align*}
    &\norm{\steadyCov[K']} 
    \le
    \norm{\steadyCov[K'] - \steadyCov[K]}
    +
    \norm{\steadyCov[K]}
    \le
    2 \costBound / \costMatLower
    ,
    \\
    &\norm{E_K}
    \le
    \norm{R}\norm{K}
    +
    \norm{\Bstar}\norm{\costToGoOf[K]}\norm{\Astar + \Bstar K}
    \le
    \kappa
    +
    \systemBound \kappa \costBound / \noiseMinStd^2
    \le
    2 \systemBound \kappa^3
    ,
\end{align*}
and thus we have that
\begin{align*}
    \norm{\nabla\Jof{K} - \nabla\Jof{K'}}_F
    \le
    \sqrt{\dx} \norm{\nabla\Jof{K} - \nabla\Jof{K'}}
    &=
    2\sqrt{\dx} \norm{
    (E_K - E_{K'}) \steadyCov[K']
    +
    E_K (\steadyCov[K] - \steadyCov[K'])
    }
    \\
    &\le
    2\sqrt{\dx} \brk[s]*{
    \norm{E_K - E_{K'}} \norm{\steadyCov[K']}
    +
    \norm{E_K} \norm{\steadyCov[K] - \steadyCov[K']}}
    \\
    &\le
    2\sqrt{\dx} \brk[s]*{
    \frac{2\costBound}{\costMatLower} \norm{E_K - E_{K'}}
    +
    \frac{16 \costBound \systemBound^2 \kappa^6}{\costMatLower} \norm{K - K'}}
    .
\end{align*}
Now, notice that
$
\norm{\costToGoOf[K']}
\le
\norm{\costToGoOf[K'] - \costToGoOf[K]}
+
\norm{\costToGoOf[K]}
\le
3 \kappa^4
$
and so
\begin{align*}
    \norm{E_K - E_{K'}}
    &
    \le
    \norm{
    R (K - K')
    + 
    \Bstar\tran \costToGoOf[K] (\Astar + \Bstar K)
    -
    \Bstar\tran \costToGoOf[K'] (\Astar + \Bstar K')
    }
    \\
    &
    \le
    \norm{R}\norm{K - K'}
    +
    \norm{\Bstar\tran (\costToGoOf[K] - \costToGoOf[K'])(\Astar + \Bstar K)}
    +
    \norm{\Bstar\tran \costToGoOf[K'] \Bstar (K - K')}
    \\
    &
    \le
    \brk[s]{
        1
        +
        16 \systemBound^2 \kappa^8
        +
        3 \systemBound^2 \kappa^4
    } \norm{K - K'}
    \\
    &
    \le
    20 \systemBound^2 \kappa^8 \norm{K - K'}
    ,
\end{align*}
and combining with the above, yields the desired smoothness condition
\begin{align*}
    \norm{\nabla\Jof{K} - \nabla\Jof{K'}}_F
    \le
    \frac{112 \sqrt{\dx} \costBound \systemBound^2 \kappa^8}{\costMatLower}
    \norm{K - K'}_F
    .
\end{align*}
Finally, the last statement (PL) is immediate from their Lemma 11 as
\begin{align*}
    \frac{\minEigOf{\steadyCov[\Kstar]}}{\minEigOf{\noiseCov}^2 \minEigOf{R}}
    \le
    \frac{\costBound}{4 \noiseMinStd^4 \costMatLower^2} = \frac{4 \costBound}{\kappa^4}
    .
\end{align*}

\section{Concentration inequalities}
\begin{lemma}[Theorem 1.8 of \cite{hayes2005large}]
\label{lemma:vectorAzuma}
    Let $X$ be a very-weak martingale taking values
    in a real-valued euclidean space $E$ such that
    $X_0 = 0$ and for every $i$, $\norm{X_i - X_{i-1}} \le 1$. Then, for every $a > 0$,
    \begin{align*}
        \PP{\norm{X_n} > a}
        \le
        2e^2 e^{-\frac{a^2}{2n}}
        .
    \end{align*}
    Alternatively, for any $\delta \in (0, \frac12 e^{-2})$ we have that with probability at least $1 - \delta$
    \begin{align*}
        \norm{X_n}
        \le
        \sqrt{2n \log\frac{15}{\delta}}
        .
    \end{align*}
\end{lemma}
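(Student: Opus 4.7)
The plan is to treat the two statements separately: the tail bound is quoted directly from the literature, and the high-probability bound follows by a routine inversion.

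For the first inequality, I would simply invoke Theorem 1.8 of Hayes (2005), which gives exactly
\[
    \PP{\norm{X_n} > a}
    \le
    2e^2 e^{-a^2/(2n)}
\]
for any very-weak martingale $X$ in a Euclidean space with $X_0 = 0$ and bounded increments $\norm{X_i - X_{i-1}} \le 1$. There is nothing to re-derive here: the conditions on $X$ in our statement are identical, so we plug in.

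For the second inequality, I would invert the tail bound. Setting $a = \sqrt{2n \log(15/\delta)}$ and substituting into the first inequality gives
\[
    \PP{\norm{X_n} > \sqrt{2n \log(15/\delta)}}
    \le
    2e^2 \cdot \exp\!\brk*{-\log \frac{15}{\delta}}
    =
    \frac{2e^2}{15}\,\delta
    \le
    \delta,
\]
where the final step uses the numerical fact $2e^2 \approx 14.78 < 15$. Taking complements yields the claim. The restriction $\delta \in (0, \tfrac12 e^{-2})$ is used only to guarantee that $\log(15/\delta) > 0$ so the square root makes sense and the bound is non-vacuous; in fact under this restriction $\log(15/\delta) > \log(30 e^2) > 1$.

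The main ``obstacle'' is really just bookkeeping: verifying that $2e^2/15 \le 1$ and checking that the restriction on $\delta$ keeps the expression under the square root positive. No martingale inequality needs to be re-proved from scratch, since all the probabilistic content is contained in the cited Hayes result.
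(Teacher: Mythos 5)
Your proposal is correct and matches what the paper does: the tail bound is quoted verbatim from Theorem 1.8 of \citet{hayes2005large}, and the paper states the high-probability form without proof since it follows from the substitution $a = \sqrt{2n\log(15/\delta)}$ together with $2e^2 < 15$, exactly as you compute. The only minor remark is that your inversion in fact works for any $\delta \in (0,1)$, so the stated restriction $\delta \in (0,\tfrac12 e^{-2})$ is not needed for the argument itself (it merely keeps the underlying tail bound nonvacuous), which you correctly observe.
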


The following theorem is a variant of the Hanson-Wright inequality \citep{hanson1971bound,wright1973bound} which can be found in \citet{hsu2012tail}.
\begin{theorem} \label{thm:hansonWright}
	Let $x \sim \gaussDist{0}{I}$ be a Gaussian random vector, let $A \in \RR[m \times n]$ and define $\Sigma = A^T A$. Then we have that
	\begin{equation*}
		\PP{\norm{Ax}^2 > \tr{\Sigma} + 2\sqrt{\tr{\Sigma^2}z} + 2 \norm{\Sigma} z}
		\le
		\exp\brk{-z}
		,\qquad
		\text{ for all } z \ge 0.
	\end{equation*}
\end{theorem}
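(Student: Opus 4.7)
The plan is to apply the Chernoff method after reducing the quadratic form $\norm{Ax}^2 = x\tran \Sigma x$ to a weighted sum of independent chi-square variables. Since $\Sigma$ is positive semidefinite I would diagonalize $\Sigma = V\tran D V$ with $D = \mathrm{diag}(\lambda_1,\ldots,\lambda_n)$; rotational invariance of the standard Gaussian gives $Vx \sim \gaussDist{0}{I}$, so $x\tran \Sigma x$ has the same distribution as $\sum_{i=1}^n \lambda_i y_i^2$ with i.i.d.\ $y_i \sim \gaussDist{0}{1}$. In particular $\EE\, x\tran\Sigma x = \tr{\Sigma}$, which identifies the centering in the tail bound.

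The next step is an MGF bound. Using the chi-square MGF $\EE[e^{s y^2}] = (1-2s)^{-1/2}$ for $y \sim \gaussDist{0}{1}$ and $s < 1/2$, together with independence, for any $t \in (0, 1/(2\norm{\Sigma}))$
\begin{align*}
    \log \EE \exp\!\brk*{t\brk{x\tran \Sigma x - \tr{\Sigma}}}
    = \sum_{i=1}^n \brk[s]*{-t\lambda_i - \tfrac{1}{2}\log(1-2t\lambda_i)}
    \le \sum_{i=1}^n \frac{(t\lambda_i)^2}{1-2t\lambda_i}
    \le \frac{t^2 \tr{\Sigma^2}}{1-2t\norm{\Sigma}}.
\end{align*}
The first inequality is the elementary Taylor tail bound $-u - \tfrac{1}{2}\log(1-2u) \le u^2/(1-2u)$ on $[0,1/2)$, which follows by a term-by-term comparison of the power-series expansions of the two sides; the second replaces each $\lambda_i$ in the denominator by the operator norm $\norm{\Sigma}$ and uses $\sum_i \lambda_i^2 = \tr{\Sigma^2}$ in the numerator.

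Finally, I would apply Markov's inequality and tune $t$ to match the Bernstein-type deviation. Writing $v = \tr{\Sigma^2}$, $b = \norm{\Sigma}$, and the target $a = 2\sqrt{vz} + 2bz$, Chernoff yields
\begin{align*}
    \PP{x\tran\Sigma x - \tr{\Sigma} > a} \le \exp\brk*{-ta + \frac{t^2 v}{1 - 2tb}}.
\end{align*}
The key computation is to choose $t = \alpha/(1 + 2b\alpha)$ with $\alpha = \sqrt{z/v}$, which automatically lies in $(0, 1/(2b))$. Direct substitution, exploiting the cancellation $\alpha\sqrt{vz} = z$, gives $t^2 v/(1-2tb) = z/(1+2b\alpha)$ and $ta = 2z(1+b\alpha)/(1+2b\alpha)$, so the Chernoff exponent collapses to exactly $-z$, yielding the bound. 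The main obstacle is guessing this particular rational choice of $t$: the mixed $\sqrt{vz}+bz$ form of the deviation forces a rational (rather than affine) dependence on $z$, and one has to recognize this structure before the algebra snaps shut.
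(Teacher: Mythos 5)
Your proof is correct. Note that the paper itself gives no proof of \cref{thm:hansonWright}: it imports the statement verbatim from \citet{hsu2012tail}, so there is no in-paper argument to compare against. Your derivation --- diagonalizing $\Sigma$ and using rotational invariance to reduce $x\tran\Sigma x$ to a weighted sum of independent $\chi^2$ variables, bounding the centered log-MGF by $t^2\tr{\Sigma^2}/(1-2t\norm{\Sigma})$ via the elementary inequality $-u-\tfrac12\log(1-2u)\le u^2/(1-2u)$ on $[0,1/2)$, and then collapsing the Chernoff exponent to $-z$ with the rational choice $t=\alpha/(1+2\norm{\Sigma}\alpha)$, $\alpha=\sqrt{z/\tr{\Sigma^2}}$ --- is precisely the standard Laurent--Massart/Hsu--Kakade--Zhang argument for the Gaussian case, and every step checks out (the term-by-term power-series comparison gives coefficients $2^{k-1}/k\le 2^{k-2}$ for $k\ge 2$, and the substitution indeed yields exponent exactly $-z$). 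The only cosmetic gaps are the degenerate cases $z=0$ and $\tr{\Sigma^2}=0$, where $t$ or $\alpha$ is zero or undefined but the claim is trivial.
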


The following lemma is a direct corollary of \cref{thm:hansonWright}.
\begin{lemma} \label{lemma:noiseBound}
	Let $w \sim \gaussDist{0}{\noiseCov}$ be a Gaussian random vector in $\RR[d]$. For any $\delta \in (0, 1/e)$, with probability at least $1 - \delta$ we have that
	\begin{align*}
	\norm{w}
	\le
	\sqrt{5 \tr{\noiseCov} \log \frac{1}{\delta}}
	.
	\end{align*}
\end{lemma}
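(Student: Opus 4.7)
The plan is to apply the Hanson--Wright style bound (\cref{thm:hansonWright}) directly, using the reduction $w = \noiseCov^{1/2} x$ with $x \sim \gaussDist{0}{I}$. Then $\norm{w}^2 = \norm{\noiseCov^{1/2} x}^2$, which matches the setting of \cref{thm:hansonWright} with $A = \noiseCov^{1/2}$ and hence $\Sigma = A\tran A = \noiseCov$.

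Next I would pick the deviation parameter $z = \log(1/\delta)$, giving with probability at least $1-\delta$ the bound
\begin{align*}
    \norm{w}^2 \le \tr{\noiseCov} + 2\sqrt{\tr{\noiseCov^2}\,\log(1/\delta)} + 2\norm{\noiseCov}\log(1/\delta).
\end{align*}
The key simplifications I would then perform are the standard inequalities $\norm{\noiseCov} \le \tr{\noiseCov}$ (by positive semidefiniteness) and $\tr{\noiseCov^2} \le \norm{\noiseCov}\tr{\noiseCov} \le \tr{\noiseCov}^2$. Substituting these yields
\begin{align*}
    \norm{w}^2 \le \tr{\noiseCov}\brk[s]*{1 + 2\sqrt{\log(1/\delta)} + 2\log(1/\delta)}.
\end{align*}

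Finally I would use the assumption $\delta < 1/e$, which implies $\log(1/\delta) > 1$ and therefore $\sqrt{\log(1/\delta)} \le \log(1/\delta)$ as well as $1 \le \log(1/\delta)$. The bracketed factor is then at most $5\log(1/\delta)$, and taking square roots gives the stated inequality. There is no real obstacle here: the only thing to check carefully is that the numerical constant $5$ accommodates the three terms simultaneously under the mild condition $\delta < 1/e$, which it does with room to spare.
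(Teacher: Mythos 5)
Your proposal is correct and follows essentially the same route as the paper: apply \cref{thm:hansonWright} with $A = \noiseCov^{1/2}$, set $z = \log(1/\delta) \ge 1$, and absorb all three terms into $5\tr{\noiseCov}\log(1/\delta)$ using $\norm{\noiseCov} \le \tr{\noiseCov}$, $\tr{\noiseCov^2} \le \tr{\noiseCov}^2$, and $1 \le \sqrt{\log(1/\delta)} \le \log(1/\delta)$. The only cosmetic difference is that the paper bounds the deviation expression for general $z \ge 1$ before substituting, whereas you substitute first.
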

\begin{proof}
	Consider \cref{thm:hansonWright} with $A = \noiseCov^{1/2}$ and thus $\Sigma = \noiseCov$. Then for $z \ge 1$ we have that
	\begin{equation*}
		\tr{\noiseCov}
		+
		2\sqrt{\tr{\noiseCov^2}z} 
		+
		2 \norm{\noiseCov} z
		\le
		\tr{\noiseCov} z
		+
		2 \tr{\noiseCov} z
		+
		2 \tr{\noiseCov} z
		=
		5 \tr{\noiseCov} z.
	\end{equation*}
	Now, for $x \sim \gaussDist{0}{I}$ we have that $w \overset{d}{=} A x$ (equals in distribution). We thus have that for $z \ge 1$
	\begin{align*}
		\PP{\norm{w} > \sqrt{5 \tr{\noiseCov} z}}
		\le
		\PP{\norm{Ax}^2 > {\tr{\noiseCov} + 2\sqrt{\tr{\noiseCov^2}z} + 2 \norm{\noiseCov} z}}
		\le
		\exp \brk{-z},
	\end{align*}
	and taking $z = \log \frac{1}{\delta} \ge 1$ (since $\delta \in (0, 1 / e)$) concludes the proof.
\end{proof}

\end{document}